\documentclass[lettersize,journal]{IEEEtran}

\usepackage{amsmath,amsfonts,amssymb}
\usepackage{amsthm}
\usepackage{algorithmic}
\usepackage{algorithm}
\usepackage{array}
\usepackage{booktabs}
\usepackage[caption=false,font=normalsize,labelfont=sf,textfont=sf]{subfig}
\usepackage{textcomp}
\usepackage{stfloats}
\usepackage{url}
\usepackage{graphicx}
\usepackage{pgfplots}
\usepackage{cite}
\usepackage[hidelinks]{hyperref}
\usepackage[capitalise]{cleveref}
\crefname{equation}{Eq.}{Eqs.}
\Crefname{equation}{Equation}{Equations}
\crefname{section}{Sec.}{Secs.}
\Crefname{section}{Section}{Sections}
\crefname{figure}{Fig.}{Figs.}
\Crefname{figure}{Fig.}{Figs.}
\crefname{table}{Table}{Tables}
\Crefname{table}{Table}{Tables}
\crefname{algorithm}{Alg.}{Algs.}
\Crefname{algorithm}{Algorithm}{Algorithms}

\pgfplotsset{compat=1.18}
\usetikzlibrary{positioning,arrows.meta,decorations.pathreplacing}

\newtheorem{proposition}{Proposition}

\newtheorem{remark}{Remark}

\newcommand{\Iinf}{I_t}
\newcommand{\Iref}{\widehat{I}^{\,\mathrm{ref}}_t}
\newcommand{\Amat}{\mathbf{A}_t}
\newcommand{\tr}{{\mathsf{T}}}

\begin{document}

\title{O-SCD++: Replay-Free Online Scene Change Detection via Recursive Evidence Fusion}

\author{Chamuditha~Jayanga~Galappaththige,
        Jason~Lai,
        Lloyd~Windrim,
        Timothy Patten,
        Donald~G.~Dansereau,
        Niko~Suenderhauf,
        and~Dimity~Miller%
\thanks{C.~J.~Galappaththige, N.~Suenderhauf and D.~Miller are with the QUT Centre for Robotics, Queensland University of Technology, Australia, and with the Australian Research Council Research Hub in Intelligent Robotic Systems for Real-Time Asset Management (ARIAM). E-mail: \{ahangama, d24.miller\}@qut.edu.au.}%
\thanks{J.~Lai and D.~G.~Dansereau are with the Australian Centre for Robotics, The University of Sydney, Australia. D.~G.~Dansereau is also with ARIAM.}%
\thanks{L.~Windrim and T.~Patten are with Abyss Solutions and with ARIAM.}%
}%

\markboth{}{}

\maketitle

\begin{abstract}
Online scene change detection requires an agent to detect what has changed in a previously mapped environment while it is still on site, observing from unconstrained viewpoints and without access to future observations. This causal setting is strictly harder than its offline counterpart, and online methods have consistently trailed offline accuracy. O\nobreakdash-SCD closed that gap by treating change as a persistent property of a 3D Gaussian splatting map of the reference scene, surpassing the strongest offline methods in change detection accuracy while operating at over 10 frames per second. O\nobreakdash-SCD optimizes a change field over the Gaussians of the reference map, refitting it to all past cue maps at every frame, a step that consumes more than half of the per-frame budget and forces accuracy to be traded against speed. We present a replay-free formulation that eliminates this trade-off. Because the reference map stays fixed, rendering the change field is a linear operation. Each new observation can therefore be carried onto the map's Gaussians directly, by a single backward pass of the renderer, which reduces multi-view fusion to an exact two-scalar recursion per Gaussian. Past observations are never stored or revisited. The self-supervised objective that O\nobreakdash-SCD minimizes by descent also determines the decision threshold exactly, by sorting, yielding a label-free per-frame cutoff whose margin provides a selective-prediction signal. On PASLCD, our method improves mean intersection over union from $0.489$ to $0.575$ and throughput from $11.8$ to $25.6$ frames per second, even exceeding O\nobreakdash-SCD's offline result while remaining strictly causal. 
\end{abstract}

\begin{IEEEkeywords}
Online Scene Change Detection,  3D Gaussian Splatting.
\end{IEEEkeywords}

\section{Introduction}
\label{sec:intro}

\IEEEPARstart{A}{n} autonomous agent that revisits a previously mapped environment -- an inspection robot returning to an electrical substation or a construction site, a service robot re-entering a room it mapped the day before -- must work out what has changed since its last visit, and must do so while suppressing nuisance variation such as shadows, reflections, or illumination shifts. The answer is only actionable if it arrives \emph{while the agent is still on site}: a change report delivered after the deployment has ended cannot trigger a closer inspection, a re-planned route, or an intervention.

Multi-view scene change detection (SCD) has accordingly developed along two regimes. In the \emph{offline} regime, complete pre- and post-change captures are available and change is inferred with full hindsight. Work in this regime has matured rapidly, moving from image pairs at near-identical viewpoints~\cite{sakurada_change_2015,alcantarilla2018street,daudt2018fully,cyws2d} to pose-agnostic, label-free formulations that reason inside a photorealistic 3D representation of the reference scene~\cite{galappaththige2025multi,lu20253dgs,galappaththige2026pixels,jiang2025gaussian,zhou2026d}. In the \emph{online} regime, the agent must emit a decision for every frame as it arrives, from a partial and incrementally growing stream, with no access to the future. This is the regime that matters for acting on site, and it is strictly harder than the offline counterpart.

\begin{figure}[t]
  \centering
  \definecolor{oscdblue}{RGB}{31,119,180}
\definecolor{oursred}{RGB}{214,39,40}
\definecolor{shadegray}{RGB}{230,230,230}

\begin{tikzpicture}

\begin{axis}[
    width=8.8cm,
    height=6.2cm,
    xlabel= {Throughput (FPS)},
    ylabel={SCD Performance (mIoU)},
    xmin=2, xmax=27.5,
    ymin=0.44, ymax=0.588,
    xtick={5,10,15,20,25},
    ytick={0.45, 0.47, 0.49, 0.51, 0.53, 0.55, 0.57},
    grid=major,
    grid style={line width=0.3pt, draw=gray!30},
    major grid style={line width=0.3pt, draw=gray!30},
    tick label style={font=\footnotesize},
    label style={font=\footnotesize},
    legend style={
        font=\fontsize{7}{8}\selectfont,
        at={(0.5,1.02)},
        anchor=south,
        legend columns=-1,
        draw=none,
        fill=none,
        inner sep=1pt,
        column sep=10pt,
    },
    legend image post style={
        mark size=2.5pt,
    },
    clip=false,
]

\addplot[
    fill=oscdblue,
    fill opacity=0.06,
    draw=none,
    forget plot,
] coordinates {
    (2.9, 0.44)
    (2.9, 0.518)
    (4.84, 0.5064)
    (8.39, 0.4907)
    (12.01, 0.4866)
    (15.88, 0.4808)
    (19.79, 0.4763)
    (22.40, 0.4586)
    (22.40, 0.44)
} -- cycle;

\addplot[
    color=oscdblue,
    mark=square*,
    mark size=2.5pt,
    line width=1.2pt,
    mark options={fill=oscdblue},
] coordinates {
    (2.9,   0.518)
    (4.84,  0.5064)
    (8.39,  0.4907)
    (12.01, 0.4866)
    (15.88, 0.4808)
    (19.79, 0.4763)
    (22.40, 0.4586)
};
\addlegendentry{O-SCD (varying iterations)}

\addplot[
    dashed,
    line width=0.7pt,
    color=oscdblue!70,
    forget plot,
] coordinates {
    (2, 0.556)
    (27.5, 0.556)
};
\node[font=\scriptsize, color=oscdblue!80!black, anchor=south west]
    at (axis cs:2.0, 0.536) {O-SCD \emph{offline} upper bound};

\addplot[
    only marks,
    mark=star,
    mark size=6pt,
    line width=1.5pt,
    color=oursred,
    mark options={fill=oursred, draw=oursred},
] coordinates {
    (25.8, 0.575)
};
\addlegendentry{O-SCD++ (replay-free)}

\node[font=\scriptsize, color=oscdblue!80!black, anchor=south, yshift=3pt]
    at (axis cs:2.9, 0.518) {128};
\node[font=\scriptsize, color=oscdblue!80!black, anchor=south, yshift=3pt]
    at (axis cs:4.84, 0.5064) {64};
\node[font=\scriptsize, color=oscdblue!80!black, anchor=south, yshift=3pt]
    at (axis cs:8.39, 0.4907) {32};
\node[font=\scriptsize, color=oscdblue!80!black, anchor=south, yshift=3pt, align=center]
    at (axis cs:12.01, 0.4866) {(default)\\\textbf{16}};
\node[font=\scriptsize, color=oscdblue!80!black, anchor=south, yshift=3pt]
    at (axis cs:15.88, 0.4808) {8};
\node[font=\scriptsize, color=oscdblue!80!black, anchor=south, yshift=3pt]
    at (axis cs:19.79, 0.4763) {4};
\node[font=\scriptsize, color=oscdblue!80!black, anchor=south, yshift=3pt]
    at (axis cs:22.40, 0.4586) {2};

\addplot[
    dashed,
    line width=0.5pt,
    color=oscdblue!60,
    forget plot,
] coordinates {
    (2, 0.4866) (12.01, 0.4866)
};
\addplot[
    dashed,
    line width=0.5pt,
    color=oscdblue!60,
    forget plot,
] coordinates {
    (12.01, 0.44) (12.01, 0.4866)
};

\addplot[
    dashed,
    line width=0.5pt,
    color=oursred!60,
    forget plot,
] coordinates {
    (2, 0.575) (25.8, 0.575)
};
\addplot[
    dashed,
    line width=0.5pt,
    color=oursred!60,
    forget plot,
] coordinates {
    (25.8, 0.44) (25.8, 0.575)
};

\node[font=\scriptsize, color=oursred, anchor=east, align=center]
    at (axis cs:26, 0.565) {\textbf{Faster} \textit{and}\\\textbf{ more accurate}};

\end{axis}
\end{tikzpicture}
  \caption{The state-of-the-art O\protect\nobreakdash-SCD~\cite{galappaththige2025changes} fuses multi-view evidence by iterative optimization over replayed cue maps, which forces detection quality to be traded against speed (blue, annotated with the per-frame iteration budget). Our novel replay-free formulation (red star) derived in \cref{sec:method} eliminates this trade-off: it is simultaneously more accurate and more than twice as fast. Our formulation surpasses even O\protect\nobreakdash-SCD's offline performance while operating strictly causally.}
  \label{fig:teaser}
\end{figure}

This online regime has received very little attention compared to the offline counterpart. Early work~\cite{park2021changesim} in the online regime lagged far behind the offline regime in detection accuracy.
The conference version of our work, O\nobreakdash-SCD~\cite{galappaththige2025changes}, closed that gap. Its central idea is that change could be held as a \emph{persistent property of the scene representation} rather than predicted per image individually. O\nobreakdash-SCD attaches a scalar change attribute to every primitive of a 3D Gaussian splatting (3DGS)~\cite{kerbl20233d} map of the pre-change scene representing a change field; incoming images are registered against that map, image-space cues are extracted against a rendered reference view, and those cues are fused into the per-primitive attributes. Because the attributes are shared across viewpoints, consistent evidence reinforces while view-dependent nuisance does not, so multi-view consistency is obtained without annotation and without any assumption on the revisit trajectory. O\nobreakdash-SCD~\cite{galappaththige2025changes} was the first online method to be simultaneously pose-agnostic, label-free, multi-view, and operate at over 10\, FPS while even surpassing offline baselines.

However, that result comes at a structural cost: the fusion step of O\nobreakdash-SCD~\cite{galappaththige2025changes} is an iterative optimization. At every frame, it performs a budget of gradient steps on the change attributes, each step rendering the change field at a frame sampled from a growing buffer of previously observed frames and backpropagating a self-supervised objective against that frame's stored cue map. We refer to this re-use of stored past observations as \emph{replay}. The procedure accounts for more than half of O-SCD's per-frame runtime, and the iteration budget is the only quantity available to reduce it: fewer steps buy throughput and leave the change field less converged. O\nobreakdash-SCD's accuracy and speed therefore trade off directly (\cref{fig:teaser}).

In this paper, we replace the iterative fusion and the fixed decision cutoff of O\nobreakdash-SCD with direct computations. Each observation is fused into the change field in closed form, without replay, and the decision threshold of every frame is derived exactly from the self-supervised objective that O\nobreakdash-SCD imposes as a loss during its iterative optimization. The exact threshold also gives every pixel a decision margin. The resulting method is more accurate than O\nobreakdash-SCD and more than twice as fast. Three insights make this possible.

\noindent\textbf{The forward model is already known.} For a fixed camera and fixed primitive geometry and opacity, alpha compositing is linear in the per-primitive attribute being rendered~\cite{kerbl20233d,galappaththige2026predictive}. Carrying an image-space signal back onto the primitives that produced it is therefore a transpose rather than a fit, as exploited by training-free lifting of 2D features onto Gaussians~\cite{marrie2025ludvig,joseph2025gradient}. The differentiable rasterizer already computes that transpose in a single backward pass. For change detection, one probe render and one backward call give each primitive the contribution-weighted mean of the cue over its footprint, with no iteration budget to choose.

\noindent\textbf{Replay can be avoided.} What remains is combining the per-view estimates across observed viewpoints. Replay combines them implicitly by refitting the field to every stored observation, so past observations must be kept and the estimate re-derived at each frame. We make the combination explicit: the support-weighted mean of the per-view estimates satisfies an exact two-scalar recursion, so the aggregate is reached by processing each observation once. Persistent state is two scalars per primitive, independent of sequence length.

\noindent\textbf{The decision can be reduced to a sort.} In O\nobreakdash-SCD, descent on its self-supervised objective drives the change field towards a near-binary state, so a fixed midpoint cutoff suffices. A field computed in closed form is not driven there, and the label-free setting provides no annotation to tune a threshold against. For every frame, we instead choose the threshold whose mask minimizes that same objective. Evaluated over threshold masks rather than over a continuous field, that objective depends on a mask only through the number of pixels it selects and the cue those pixels carry. The resulting one-dimensional problem is discretely convex, and one sort and one prefix scan return its \emph{exact} minimizer for every frame, without descent.

Because the cutoff is explicit, the distance of a pixel's score to it is a well-defined decision margin, and we find it strongly predictive of correctness: withholding the $10\%$ of pixels closest to the boundary raises intersection over union on the retained region from $0.58$ to $0.83$.
 The margin therefore ranks pixels by how reliably the mask labels them, without an auxiliary uncertainty model or held-out calibration data.

This study extends our CVPR~2026 work. The resulting method, O\nobreakdash-SCD++,
sets a new state of the art in online SCD, improving change detection
accuracy and throughput simultaneously. We believe it moves online SCD
a step closer to real-time use, where accuracy no longer has to be
traded for speed.

In summary, we make the following contributions:

\begin{enumerate}
\item \textbf{Replay-free fusion.} We show that the iterative, replay-based fusion of O\nobreakdash-SCD can be replaced by a direct, closed-form estimator.
Each observation is transferred onto the Gaussians through the adjoint
of the renderer, weighted by reference coverage (\cref{sec:transfer}).
The transfers are combined as a support-weighted mean that a two-scalar
recursion maintains exactly (\cref{prop:exact}). The persistent state is
two scalars per Gaussian regardless of sequence length, and the
estimator eliminates the per-frame iteration budget behind the
accuracy--speed trade-off (\cref{sec:aggregation}).

\item \textbf{Exact label-free decision.} We derive each frame's decision threshold exactly from the self-supervised objective of O\nobreakdash-SCD, without labels or iterative optimization.
Evaluated over threshold masks of the fused score, the objective reduces
to a one-dimensional, discretely convex problem in the number of
selected pixels and the cue they carry (\cref{prop:convex}), whose exact
minimizer one sort and one prefix scan find (\cref{prop:decision}). The
distance of each pixel's score from the threshold gives, at no extra
cost, a decision margin that ranks pixels by confidence
(\cref{sec:margin,sec:selective}).

\item \textbf{Extended evaluation and analysis.} We extend the evaluation of
the conference version to CL-Splats and ChangeSim and to recent
baselines. We further analyse retrospective inference, showing that the final fused state is independent of frame order. We also analyse the proposed decision
margins in detail, showing that they strongly predict decision errors. Used upstream to attenuate cues, predictive photometric uncertainty~\cite{galappaththige2026predictive} improves both methods on all three datasets. On PASLCD, O\nobreakdash-SCD++ improves mIoU by 18\% at
$2.2\times$ the throughput of O\nobreakdash-SCD, and exceeds the offline result of O\nobreakdash-SCD while remaining causal (\cref{sec:experiments}).

\end{enumerate}

\section{Related Work}
\label{sec:related}

\subsection{Change Detection from Image Pairs}

SCD was classically posed as a bitemporal, pairwise learning problem: given two images of the same place captured at different times from identical~\cite{sakurada_change_2015,alcantarilla2018street,daudt2018fully,varghese2018changenet} or closely aligned~\cite{cyws2d,sakurada2020weakly,lin2025robust} viewpoints, predict the pixels that differ for a relevant reason, as a segmentation or less commonly as bounding boxes~\cite{cyws2d,cyws3d}. These methods are trained on human annotations, fully~\cite{alcantarilla2018street,daudt2018fully,cyws2d,varghese2018changenet} or partially~\cite{sakurada2020weakly,lee2024semi}, to separate the change signal from nuisance variation such as illumination shifts, shadows, or slight viewpoint discrepancies.

This paradigm has proven effective for \emph{street-view} change detection. \emph{Scene}-level change detection by an autonomous agent revisiting an environment is a different use case, and there the paradigm carries two major limitations. First, an autonomous agent under an unconstrained revisit rarely reoccupies an exact earlier viewpoint, so the near-identical-viewpoint assumption often fails, and methods trained under it degrade sharply when it is violated~\cite{galappaththige2025multi}. Second, pixelwise human annotation of change is tedious and costly; the space of possible changes in a complex environment is effectively unbounded, and supervised methods generalize poorly under the distribution shift~\cite{gulrajani2021search,galappaththige2025domain} common in real-world SCD~\cite{kim2025towards}. This second limitation has motivated recent label-free and zero-shot formulations~\cite{kim2025towards,kannanZero,cho2025zero,alpherts2025emplace} that remove the annotation requirement by exploiting visual foundation models~\cite{kirillov2023segment,oquab2023dinov2,ravi2024sam}, but retain the pairwise, near-aligned-viewpoint setting and, with it, the first limitation.

\subsection{Change Detection in Photorealistic Scene Representations}
\label{sec:related_3d}

Photorealistic scene representations~\cite{mildenhall2021nerf,kerbl20233d} changed the setting by making the reference scene queryable from arbitrary viewpoints: a representation of the pre-change scene is built offline and rendered at the post-change viewpoint, so the comparison is again between aligned images without the agent having to reoccupy a reference pose. This makes the setting \emph{pose-agnostic}, which originates in anomaly detection (AD)~\cite{zhou2023pad,kruse2024splatpose}: the reference representation is rendered at the viewpoint of the inference image, and the two images are compared. That comparison is nonetheless still pairwise, and the setting is narrower than the one SCD addresses -- single objects rather than scenes, generally a single change rather than several concurrent ones, and images captured under controlled illumination.

Our earlier MV3DCD~\cite{galappaththige2025multi}, with the concurrent
3DGS-CD~\cite{lu20253dgs}, established the premise this paper builds on:
change is a property of the scene rather than of any one image, and can
be held in the representation itself. MV3DCD attaches a change channel to
a 3DGS reconstruction of the reference scene and consolidates per-view
cues into it, so that evidence consistent across viewpoints reinforces
while inconsistent activations, likely caused by nuisance variation such
as reflections or illumination change, do not. Multi-view consistency
thus replaces the human annotation supervised methods rely on to separate
change from nuisance. MV3DCD targets both geometric change (objects added,
removed or moved) and surface-level change (appearance, spills), and
released a scene-level dataset with concurrent changes at multiple scales;
3DGS-CD addresses object rearrangement in tabletop scenes.

Subsequent work extends the same principle in other directions: object-consistent aggregation~\cite{zhou2026d}, dynamic and 4D Gaussians~\cite{wu20244d,jiang2025gaussian}, and ray-consistency constraints~\cite{huang2025c}. GS-DIFF~\cite{galappaththige2026pixels} moves the comparison itself into the primitives. PlenoCI~\cite{lai2026plenoci} proposes plenoptic features for change detection. U-3DGS~\cite{galappaththige2026predictive} predicts where a rendering of the reference map is likely to deviate from the true scene. Applied to SCD, it attenuates the change map by this per-pixel uncertainty, suppressing false positives that arise from rendering artefacts rather than from the scene. We apply it in the same way and compare it with our decision margin (\cref{sec:analysis,sec:selective}). A parallel line of zero-shot methods~\cite{friedlander2026goldilocs,wu2025scenediff} builds on feed-forward dense reconstruction backbones~\cite{wang2026pi,wang_dust3r_2024} and is oriented primarily towards geometric change, while VSCD~\cite{yoon2026vscd} treats unaligned video pairs under supervision.

All these SCD methods are offline: they require the complete post-change
capture in advance. Most register poses with
Structure-from-Motion (SfM)~\cite{schonberger2016structure} over the union of
both captures, which is defined only once both are complete. Feed-forward
methods avoid SfM but still need both captures together to place the two
reconstructions in a common frame before comparison.

\subsection{Online Scene Change Detection}

Online SCD has received comparatively little attention. The earliest work, ChangeSim~\cite{park2021changesim}, depends on an off-the-shelf RGB-D SLAM system~\cite{labbe2019rtab} for pose and assumes that the pre- and post-change trajectories are closely aligned. That assumption reduces the task to retrieving the nearest reference image by camera-centre distance, then predicting change between it and the current frame with a supervised pairwise model~\cite{sakurada2020weakly}.
O\nobreakdash-SCD~\cite{galappaththige2025changes} removes these assumptions on depth, aligned trajectories, and supervision while closing the accuracy gap between the offline and online settings, introducing the first approach that is simultaneously pose-agnostic, label-free, multi-view consistent, and online at over 10\,FPS. It builds on the premise established by MV3DCD~\cite{galappaththige2025multi} while removing the dependence on SfM~\cite{schonberger2016structure}, registering each incoming frame against the reference map on the fly (described in detail in \cref{sec:cues}), and improving detection accuracy by recasting MV3DCD's heuristic fusion as an optimization objective. That recasting is what lets O\nobreakdash-SCD operate online while surpassing even the strongest offline approaches, but it introduces a trade-off between speed and accuracy that \cref{sec:background} analyses. Concurrently, FastGeSCF~\cite{yeh2026towards} follows the direction of ChangeSim, retrieving the closest reference image with visual place recognition~\cite{izquierdo2024optimal} to form an image pair and detecting change with an accelerated variant of a zero-shot pairwise method~\cite{kim2025towards}. Its retrieval assumes that the revisit follows the reference trajectory closely, so that an aligned reference frame exists for each inference image, and its acceleration uses a pseudo-mask generator trained on annotated change masks~\cite{alcantarilla2018street,park2021changesim}. These choices suit its target setting, but make performance sensitive to trajectory alignment and to the training domain; \cref{tab:paslcd} shows that it does not generalise well to unconstrained trajectories under the label-free setting.

The AD methods of \cref{sec:related_3d}~\cite{zhou2023pad,kruse2024splatpose,liu2024splatpose+}
are online in the same sense as O\nobreakdash-SCD: each incoming image is registered against
the reference representation and compared against the rendered view. They are therefore pose-agnostic and label-free, but infer change independently per frame, retaining pairwise
framing. Some~\cite{zhou2023pad,kruse2024splatpose} also recover the
inference image pose by optimizing against the representation with
photometric residual, which is slow and prone to convergence failure in
large scenes containing multiple changes~\cite{galappaththige2025multi,galappaththige2025changes}.

A related but distinct body of work in robotics maintains long-term scene representations under change for navigation and manipulation, using volumetric maps~\cite{schmid2022panoptic,fu2022planesdf}, object-level tracking~\cite{qian2023pov,zhu2024living}, and more recently 3DGS~\cite{yugay2025gaussian,zeng2025gaussianupdate,ackermann2025clsplats}. Khronos~\cite{schmid2024khronos} models short- and long-term change jointly within a metric-semantic framework. These systems generally detect change at the level of objects or occupancy from depth-equipped sensors in order to keep a map usable, whereas SCD targets dense, pixel-accurate localization of change from monocular imagery, covering both geometric change and surface-level change that leaves scene geometry largely intact.

Two limitations of O\nobreakdash-SCD motivate this work. Its fusion is an iterative optimization with replay, so detection quality and throughput are coupled through the per-frame iteration budget, and the stored observations grow with the length of the revisit. Its decision is a fixed cutoff. Without labels or validation data to tune a threshold against, label-free SCD methods binarize at a fixed value~\cite{galappaththige2025multi,galappaththige2025changes,galappaththige2026pixels} or with hand-designed rules and thresholds within the pipeline~\cite{kim2025towards,lu20253dgs,lai2026plenoci}, and none reports a per-pixel confidence in its decision. Uncertainty estimates for radiance fields~\cite{galappaththige2026predictive,jiang2024fisherrf} measure the reliability of the reference map rather than of a change-detection decision. O\nobreakdash-SCD++ removes the accuracy--speed coupling by replacing the iterative, replay-based fusion with a direct estimator. Where O\nobreakdash-SCD reaches its decision through optimization and a fixed cutoff, O\nobreakdash-SCD++ computes it exactly and directly from the same objective, and obtains a per-pixel decision margin at no extra cost.

\section{Background: Online SCD with O-SCD}
\label{sec:pipeline}\label{sec:background}

This section states the task, the front end that O\nobreakdash-SCD++ shares with O\nobreakdash-SCD~\cite{galappaththige2025changes}, and the iterative change stage of O\nobreakdash-SCD that \cref{sec:method} replaces.

\subsection{Problem Setting}
\label{sec:problem}

A reference traversal of the pre-change scene is captured and reconstructed offline into a fixed 3DGS~\cite{kerbl20233d} map
\begin{equation}
\mathcal{R} = \{(\boldsymbol{\mu}_g, \boldsymbol{\Sigma}_g, o_g, \mathbf{h}_g)\}_{g=1}^{G},
\label{eq:map}
\end{equation}
where $\boldsymbol{\mu}_g \in \mathbb{R}^3$, $\boldsymbol{\Sigma}_g$, $o_g \in [0,1]$ and $\mathbf{h}_g$ are the position, covariance, opacity and view-dependent appearance of primitive $g$, together with the reference images and their poses. During a later revisit, monocular images $I_1, I_2, \dots$ arrive sequentially from unconstrained viewpoints and the inference camera poses are \emph{not} supplied. At time $t$ the system must emit a binary change mask
\begin{equation}
B_t = \mathcal{F}\!\left(I_t, \mathcal{S}_{t-1}; \mathcal{R}\right), \qquad B_t : \Omega_t \rightarrow \{0,1\},
\label{eq:task}
\end{equation}
where $\Omega_t$ is the image domain, $\mathcal{S}_{t-1}$ is the state accumulated from earlier inference images alone, and $B_t(p)=1$ denotes a genuine change relative to the reference capture.

Three properties constrain any admissible $\mathcal{F}$ and are maintained throughout. \emph{Causality}: no future observation may influence $B_t$.
\emph{Label-free}: no change annotation is accessed at any stage.
\emph{Pose-agnostic}: inference extrinsics are estimated internally against $\mathcal{R}$ following~\cite{galappaththige2025changes}, this means poses need not be supplied, not that the method is pose-free. We assume that the reference traversal observes the scene from enough viewpoints, with enough coverage, to reconstruct a sufficiently accurate 3DGS map.

\subsection{Adopted Front End}
\label{sec:cues}

Registration and change-cue extraction are adopted from O\nobreakdash-SCD~\cite{galappaththige2025changes} and are summarized here for completeness.

\noindent\textbf{Registration.} Each inference image is matched against cached reference keyframes by mutual nearest neighbours of XFeat~\cite{potje2024xfeat} descriptors. The 2D observations of the reference keyframes were triangulated offline into a consistent 3D point set. The surviving 2D--3D correspondences drive PnP~\cite{lepetit2009epnp} within RANSAC~\cite{fischler1981ransac}, refined over all inliers by a GPU-parallel mini bundle adjustment~\cite{levenberg1944method,meuleman2025fly}. Rendering $\mathcal{R}$ at the recovered pose $\widehat{T}_t$ gives an aligned reference image $\Iref$ on the inference grid.

\noindent\textbf{Change cues.} The aligned pair is compared photometrically and semantically. The photometric residual follows the 3DGS loss~\cite{kerbl20233d}, combining $L_1$ with a local structural term~\cite{wang_image_2004},
\begin{equation}
D_t(p) = 0.8\,\lVert \Iinf(p) - \Iref(p)\rVert_1 + 0.2\,\bigl[1 - \mathrm{SSIM}(p)\bigr],
\label{eq:photometric}
\end{equation}
normalized per image to $P_t(p) \in [0,1]$; a frozen SAM\,2-Tiny~\cite{ravi2024sam} encoder supplies channel-averaged embedding differences, normalized and upsampled to $F_t(p) \in [0,1]$.  $c_t$ is the image-space evidence consumed by everything that follows.
\begin{equation}
c_t(p) = \tfrac{1}{2}\left[P_t(p) + F_t(p)\right] \in [0,1],
\label{eq:cue}
\end{equation}

\subsection{Iterative Change Cue Fusion in O-SCD}
\label{sec:oscd}

Given the cue map, a change stage must (i) transfer the per-pixel cue onto the primitives that produced those pixels, (ii) combine that transfer with everything observed so far, and (iii) decide on a binary mask. O\nobreakdash-SCD~\cite{galappaththige2025changes} performs all three by optimizing a change field. We describe it in these three steps, which \cref{sec:method} follows one by one.

\noindent\textbf{Change field and objective.} O\nobreakdash-SCD initializes a change representation $\mathcal{R}_{\mathrm{change}}$ from $\mathcal{R}$ by attaching a scalar change parameter to every primitive. Rendering that parameter at the viewpoint of frame $j$ and applying a sigmoid yields a soft mask $\widetilde{M}_j$, and the parameters are fitted by minimizing the self-supervised fusion objective
\begin{equation}
\mathcal{L}_{\mathrm{SSF}}(j) = \operatorname{mean}\!\left[2c_j \odot \bigl(1 - \widetilde{M}_j\bigr)\right] + \log\!\left(1 + \operatorname{mean}\bigl(\widetilde{M}_j\bigr)^{2}\right),
\label{eq:ssf_conf}
\end{equation}
where $c_j$ is the change cue of frame $j$ from \cref{eq:cue} and $\odot$ is the Hadamard product; the factor of two restores the $[0,2]$ scale on which the conference objective is written. The first term rewards asserting change where cues are strong; the second penalizes claiming too much of the image and so excludes the trivial solution $\widetilde{M}_j \equiv 1$.

\noindent\textbf{(i) Transfer by gradient descent.} Evidence reaches the primitives only through gradients: rendering $\mathcal{R}_{\mathrm{change}}$ at viewpoint $j$, evaluating \cref{eq:ssf_conf} against $c_j$ and backpropagating through the rasterizer moves each change parameter a small step towards the cue of the pixels it covers. A single step does not transfer a cue; the field approaches it over many steps.

\noindent\textbf{(ii) Combination by replay.} Fitting only the current frame would overwrite what earlier frames established. O\nobreakdash-SCD therefore stores the cue map $c_j$ and pose $\widehat{T}_j$ of every frame observed so far, and at frame $t$ takes $n=16$ gradient steps, each on a frame $j \leq t$ sampled from this buffer. We call this re-use of stored past observations \emph{replay}. Because the parameters are shared across viewpoints, minimizing \cref{eq:ssf_conf} over the replayed frames is what enforces multi-view consistency: a cue that recurs from many viewpoints is worth asserting.

\noindent\textbf{(iii) Decision by a fixed cutoff.} After the budget is exhausted, the field is rendered at the current viewpoint and binarized at the fixed midpoint cutoff $0.5$. The sigmoid bounds the rendered field, which keeps the optimization stable, but it does not make the field a measure of confidence. The sparsity term drives unchanged surfaces to the non-negativity bound of the rendered field, while the diminishing slope of the sigmoid lets detected change settle only modestly above the cutoff. The distance from a pixel's score to the cutoff therefore reflects how far the optimizer has moved it rather than how strong the underlying evidence is. \Cref{sec:margin} returns to this point once the cutoff is derived rather than fixed.

\noindent\textbf{Cost.}
\label{sec:cost}
Per frame, the change stage performs $n$ rasterizer forward--backward passes and one render at the current viewpoint, and the buffer holds one cue map and pose for every frame processed so far:
\begin{equation}
\begin{gathered}
\mathcal{T}_{\text{O\nobreakdash-SCD}} = n\,\bigl(\mathcal{T}_{\mathrm{rast}} + \mathcal{T}_{\mathrm{adj}}\bigr) + \mathcal{T}_{\mathrm{rast}} + O(N),\\
\mathcal{M}_{\text{O\nobreakdash-SCD}} = O(G + tN),
\end{gathered}
\label{eq:complexity_oscd}
\end{equation}
where $\mathcal{T}_{\mathrm{rast}}$ and $\mathcal{T}_{\mathrm{adj}}$ are the costs of a rasterizer forward and backward pass, $G$ is the number of primitives and $N$ the number of pixels. Two properties of this procedure determine its cost. It is \emph{iterative}: the per-frame time is the iteration budget $n$ times a forward--backward pass, so quality and throughput are coupled by construction. It is \emph{replay-based}: past cue maps must be retained and re-rendered, so memory grows with the length $t$ of the revisit and each observation is consumed many times. This stage takes $64\%$ of each frame, over twice any other component (\cref{tab:runtime}). \Cref{fig:teaser} shows this trade-off.

The trade-off stems from the estimator rather than from the problem. Each of the three steps of \cref{sec:oscd} solves by iteration a problem that has a direct solution, and \cref{fig:steps} sets them side by side.

\noindent\textbf{(O1) The forward model is known and linear.} Step (i) moves evidence onto primitives by backpropagating \cref{eq:ssf_conf}, one gradient step at a time. For fixed camera, geometry and opacity, however, the map from per-primitive change attributes to a rendered change image is a known linear operator (\cref{sec:linear}). Carrying cues back onto primitives is then its transpose, which the rasterizer already computes, so the transfer needs no descent (\cref{sec:transfer,sec:adjoint}).

\noindent\textbf{(O2) The multi-view combination can be computed in closed form.} Replay, step (ii), exists only to make the optimizer account for past frames, and it is the source of both the $n$ forward--backward passes and the $O(tN)$ buffer in \cref{eq:complexity_oscd}. If the quantity that replay converges towards can be written as an aggregate of per-frame transfers, and if that aggregate admits a recursion, then past frames need not be stored at all (\cref{sec:aggregation}).

\noindent\textbf{(O3) The decision is one-dimensional.} Step (iii) binarizes at a fixed cutoff that \cref{eq:ssf_conf} plays no part in choosing. Yet \cref{eq:ssf_conf} depends on a mask only through two scalar functionals of it, the number of selected pixels and the cue they carry. Evaluated over binary threshold masks rather than over a continuous field, it therefore reduces to a one-dimensional problem that can be solved exactly, which yields the cutoff itself (\cref{sec:decision}).

\section{Replay-Free Multi-View Evidence Fusion}
\label{sec:method}

\Cref{fig:overview} gives an overview. The method follows the three steps of \cref{sec:oscd}. \Cref{sec:linear} establishes the linear structure that makes a direct solution possible; \cref{sec:transfer,sec:adjoint} derive the per-view transfer, step (i), and show it costs one rasterizer backward pass; \cref{sec:aggregation} derives the exact recursion that replaces replay, step (ii); \cref{sec:prediction} forms the per-pixel score; and \cref{sec:decision} derives the exact threshold, step (iii).

\subsection{Splatting as a Linear Operator}
\label{sec:linear}

A rendered pixel is a mixture of the primitives along its ray. We first make the mixing weights explicit, since everything that follows uses them. Fix the camera at the estimated pose $\widehat{T}_t$ and fix the geometry and opacity of $\mathcal{R}$. Let $\alpha_{tg}(p)$ denote the screen-space opacity that primitive $g$ contributes at pixel $p$, and let $\prec_t$ order primitives front to back. Alpha compositing assigns to $g$ the share
\begin{equation}
A_t(p,g) = \alpha_{tg}(p)\!\!\prod_{h \prec_t g}\!\!\bigl[1 - \alpha_{th}(p)\bigr],
\qquad
a_t(p) = \sum_{g=1}^{G} A_t(p,g),
\label{eq:compositing}
\end{equation}
so that $A_t(p,g)$ is the fraction of pixel $p$ that primitive $g$ is responsible for painting, and the \emph{coverage} $a_t(p) \in [0,1]$ is the total fraction that the reference map explains at all. Coverage is near one wherever the reconstruction is complete, or near zero where the ray escapes to background.

\begin{figure}[t]
\centering
\definecolor{stours}{RGB}{252,232,232}
\definecolor{stoursd}{RGB}{214,39,40}
\definecolor{stold}{RGB}{222,235,247}
\definecolor{stolddraw}{RGB}{31,119,180}
\resizebox{\columnwidth}{!}{%
\begin{tikzpicture}[
    font=\footnotesize,
    stcell/.style={draw, rounded corners=2pt, align=left, text width=3.35cm, minimum height=1.05cm, inner sep=3pt},
    stoldbox/.style={stcell, fill=stold, draw=stolddraw!60},
    stnewbox/.style={stcell, fill=stours, draw=stoursd!60},
    stlabel/.style={font=\footnotesize\bfseries, align=center, text width=1.2cm},
    sthead/.style={font=\footnotesize\bfseries, align=center},
    arr/.style={-{Stealth[length=4pt]}, line width=0.6pt, draw=black!60},
]
\def\dy{1.3}
\node[sthead, text=stolddraw] at (0,0.85) {O-SCD (iterative)};
\node[sthead, text=stoursd] at (4.3,0.85) {O-SCD++ (direct)};
\node[stlabel] at (-2.75,0) {(i)\\[-1pt]{\scriptsize\mdseries transfer}};
\node[stoldbox] (o1) at (0,0) {gradient steps on \cref{eq:ssf_conf}, backpropagated through the rasterizer};
\node[stnewbox] (n1) at (4.3,0) {one rasterizer backward pass applies $\Amat^{\tr}$ (\cref{eq:adjoint})};
\node[stlabel] at (-2.75,-\dy) {(ii)\\[-1pt]{\scriptsize\mdseries combine}};
\node[stoldbox] (o2) at (0,-\dy) {replay: $n{=}16$ steps per frame on stored cue maps $c_1,\dots,c_t$};
\node[stnewbox] (n2) at (4.3,-\dy) {exact recursion on two scalars $(q_g, W_g)$ (\cref{prop:exact})};
\node[stlabel] at (-2.75,-2*\dy) {(iii)\\[-1pt]{\scriptsize\mdseries decide}};
\node[stoldbox] (o3) at (0,-2*\dy) {fixed cutoff $0.5$;\\ no decision confidence};
\node[stnewbox] (n3) at (4.3,-2*\dy) {exact minimizer of \cref{eq:ssf_conf} by sort and scan (\cref{prop:decision}); margin $\mu_t$};
\draw[black!30] (-3.35,-2.5*\dy-0.12) -- (6.15,-2.5*\dy-0.12);
\node[stlabel] at (-2.75,-3*\dy+0.1) {cost\\[-1pt]{\scriptsize\mdseries per frame}};
\node[align=center, text width=3.35cm] (o4) at (0,-3*\dy+0.1) {$n(\mathcal{T}_{\mathrm{rast}}{+}\mathcal{T}_{\mathrm{adj}}) + \mathcal{T}_{\mathrm{rast}}$\\ memory $O(G + tN)$};
\node[align=center, text width=3.35cm] (n4) at (4.3,-3*\dy+0.1) {$2\mathcal{T}_{\mathrm{rast}} + \mathcal{T}_{\mathrm{adj}} + O(N\log N)$\\ memory $O(G)$};
\foreach \i in {1,2,3} \draw[arr] (o\i.east) -- (n\i.west);
\end{tikzpicture}}
\caption{The three steps of the change stage in O\protect\nobreakdash-SCD~\cite{galappaththige2025changes} and in O\protect\nobreakdash-SCD++. Each iterative step of O\protect\nobreakdash-SCD is replaced by a direct computation; the per-frame cost loses its iteration budget $n$ and its dependence on the number $t$ of frames processed.}
\label{fig:steps}
\end{figure}

\begin{figure*}[t]
\centering
\input{figures/overview}
\caption{Overview of O\protect\nobreakdash-SCD++ for one inference frame. The front end, adopted from O\protect\nobreakdash-SCD~\cite{galappaththige2025changes}, registers the inference image against the reference map and computes a change cue. The replay-free change stage transfers the cue onto the reference primitives with one rasterizer backward pass, folds it into a two-scalar state per primitive by an exact recursion, predicts a score for the current view from the updated state and the current cue, and binarizes it with the exact minimizer of a label-free objective, which also yields a per-pixel decision margin. It eliminates the replay-based optimization of O\protect\nobreakdash-SCD (dashed).}
\label{fig:overview}
\end{figure*}

Because the composite of any per-primitive scalar field $\mathbf{q} \in \mathbb{R}^{G}$ is $\sum_g A_t(p,g)\,q_g$, rendering is a \emph{linear} operator in that field once geometry and opacity are fixed,
\begin{equation}
\mathbf{r}_t = \Amat\mathbf{q}, \qquad \Amat \in \mathbb{R}^{N \times G},
\qquad \mathbf{a}_t = \Amat\mathbf{1}.
\label{eq:linear}
\end{equation}
Coverage is thus the rendered image of the all-ones field. This linearity follows directly from the compositing rule of splatting~\cite{kerbl20233d,zwicker2002ewa}. Recent work uses it to fit per-primitive photometric uncertainty by solving a linear system~\cite{galappaththige2026predictive}, and training-free lifting of 2D features onto Gaussians exploits the transpose of the rendering operator~\cite{marrie2025ludvig,joseph2025gradient}. We use this transpose to carry change evidence onto primitives and to fuse it online. We never form $\Amat$. It is used only as an operator, and \cref{sec:adjoint} shows that both it and its adjoint $\Amat^{\tr}$ are already supplied by the differentiable rasterizer.

\subsection{Rendering-Adjoint Evidence Transfer}
\label{sec:transfer}

We want every primitive to carry a running estimate of how changed the surface it represents is, built from cues carried back onto it by the transpose $\Amat^{\tr}$ of \cref{eq:linear}.

\noindent\textbf{Transferring cues onto primitives.} Applied to a cue map, $\Amat^{\tr}$ accumulates, for each primitive, the cue of every pixel it helped paint, weighted by its contribution.

Coverage enters as a second weight. The coverage $a_t(p)$ is the fraction of pixel $p$ that the reference explains. Where coverage is low, the rendered reference shows mostly background, and a high cue there signals a missing reconstruction rather than a change. We trust the cue at a pixel only in proportion to $a_t(p)$, so each pixel counts in proportion to how much of it the primitive painted \emph{and} how much of it the reference explains:
\begin{equation}
\mathbf{n}_t = \Amat^{\tr}\bigl(\mathbf{a}_t \odot \mathbf{c}_t\bigr),
\quad
\mathbf{m}_t = \Amat^{\tr}\mathbf{a}_t,
\quad
e_{g,t} = \frac{n_{g,t}}{m_{g,t}},
\label{eq:adjoint}
\end{equation}
Here $e_{g,t}=0$ wherever the mass $m_{g,t}$ vanishes. Written out, the per-primitive evidence is
\begin{equation}
e_{g,t} = \frac{\sum_{p} A_t(p,g)\,a_t(p)\,c_t(p)}{\sum_{p} A_t(p,g)\,a_t(p)} \in [0,1].
\label{eq:evidence_explicit}
\end{equation}
This is the contribution-weighted mean cue over the primitive's footprint, in which $a_t$ weights each observation rather than contributing a value to be averaged.

Three properties follow. A primitive is not judged more changed merely for being larger, since its footprint enters numerator and normalizer alike. Primitives the reference itself occludes receive little weight automatically, since their compositing weights are small behind an opaque surface. And because the cue lies in $[0,1]$ and the weights are non-negative, $e_{g,t}$ is a weighted average of cue values and so stays in $[0,1]$. The coverage weight matters only where coverage is partial, at reconstruction boundaries and in sparsely reconstructed regions, since a pixel the reference does not cover at all has $A_t(p,g)=0$ for every $g$ and drops out of both sums.

\subsection{Computing the Transfer with the Existing Rasterizer}
\label{sec:adjoint}

Both quantities in \cref{eq:adjoint} are obtained without materializing $\Amat$. Rendering is linear in the composited attribute (\cref{eq:linear}), so the Jacobian of the render with respect to that attribute is $\Amat$ itself, and a backward pass through the rasterizer~\cite{kerbl20233d} therefore applies $\Amat^{\tr}$.

Let each primitive carry a temporary spherical-harmonic coefficient $\theta_g$ at degree zero, whose map to emitted scalar value is affine, $y_g = \gamma\theta_g + \tfrac{1}{2}$ with $\gamma = 1/\sqrt{4\pi}$. Compositing gives
\begin{equation}
r_t(p) = \gamma\,\bigl(\Amat\boldsymbol{\theta}\bigr)(p) + \tfrac{1}{2}\,a_t(p).
\label{eq:affine_render}
\end{equation}
Two consequences follow. Setting $\boldsymbol{\theta}=\mathbf{0}$ makes every primitive emit $\tfrac{1}{2}$, so this \emph{probe} render returns coverage directly,
\begin{equation}
\mathbf{a}_t = 2\,\mathbf{r}^{\mathrm{probe}}_t .
\label{eq:probe}
\end{equation}
And for any image-space cotangent $\mathbf{v}$, the vector--Jacobian product of \cref{eq:affine_render} with respect to $\boldsymbol{\theta}$ is $\gamma\,\Amat^{\tr}\mathbf{v}$. Supplying $\mathbf{v}_1 = \mathbf{a}_t \odot \mathbf{c}_t$ and $\mathbf{v}_2 = \mathbf{a}_t$ in two scalar channels (two of the rasterizer's colour channels) of a single backward call returns $\gamma\mathbf{n}_t$ and $\gamma\mathbf{m}_t$, whose ratio in \cref{eq:adjoint} cancels $\gamma$. One probe forward pass and one backward call therefore deliver coverage, numerator and denominator together, at the cost of a single rendered frame.

Despite invoking automatic differentiation, this is a linear transfer and not an optimization step: nothing is fitted, no learning rate or iteration budget exists. No previously observed image, cue map, or pose participates, and each observation is transferred exactly once. We note for precision that the method is not gradient-free -- it executes a rasterizer backward pass -- but it performs no gradient \emph{descent}.

\subsection{Recursive Multi-View Aggregation}
\label{sec:aggregation}

The key result of this subsection is that multi-view fusion can be performed without replay (step (ii) of \cref{sec:oscd}). Once every view has been transferred to a per-primitive opinion and the support behind it, the fused estimate is a support-weighted mean over all views observed so far, which an exact two-scalar recursion maintains. No past view has to be stored or revisited. Each view supplies one opinion $e_{g,t}$ about each primitive it sees, together with the mass $m_{g,t}$ behind that opinion.

\noindent\textbf{From mass to a bounded weight.} The mass $m_{g,t}$ counts contribution-weighted pixels and is unbounded, so using it directly would let a primitive that fills the frame once outweigh a hundred moderate observations of another. We map it to a bounded measure of \emph{how well} a primitive was seen,
\begin{equation}
\rho_{g,t} = \frac{m_{g,t}}{m_{g,t}+1} \in [0,1),
\label{eq:rho}
\end{equation}
which is approximately $m_{g,t}$ when the primitive is barely rendered and saturates towards one once it is well observed. Unobserved primitives receive $\rho_{g,t}=0$ and are left untouched. Weighting by the raw mass instead, $\rho_{g,t}=m_{g,t}$, gives the all-view mean computed by training-free lifting~\cite{marrie2025ludvig,joseph2025gradient}, which \cref{sec:ablation} compares against.

\noindent\textbf{The aggregate and its recursion.} The natural multi-view estimate is the support-weighted mean of the per-view evidence over all views observed so far,
\begin{equation}
q_{g,t} = \frac{\sum_{j \leq t}\rho_{g,j}\,e_{g,j}}{\sum_{j \leq t}\rho_{g,j}}.
\label{eq:exact}
\end{equation}
Each primitive stores only its current estimate $q_g$ and the accumulated support $W_g$ behind it, both initialized to zero, and folds in a new opinion as
\begin{align}
W_{g,t} &= W_{g,t-1} + \rho_{g,t}, \qquad \eta_{g,t} = \frac{\rho_{g,t}}{W_{g,t}}, \label{eq:update_w}\\
q_{g,t} &= \bigl(1-\eta_{g,t}\bigr)\,q_{g,t-1} + \eta_{g,t}\,e_{g,t}. \label{eq:update_q}
\end{align}
Here $\eta_{g,t}$ is taken as zero when $W_{g,t}=0$, the case of a primitive that no view has yet rendered, which leaves the state at its initialization.

\begin{proposition}[Exactness of the recursion]
\label{prop:exact}
Let $\rho_{g,j} \geq 0$ and $e_{g,j} \in [0,1]$ for all $j$. Then for every $t$ with $W_{g,t} > 0$, the state produced by \cref{eq:update_w,eq:update_q} from $q_{g,0}=W_{g,0}=0$ satisfies \cref{eq:exact} exactly. Consequently $q_{g,t} \in [0,1]$, the persistent state is two scalars per primitive, and no past observation is required.
\end{proposition}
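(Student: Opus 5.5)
The plan is to prove by induction on $t$ the invariant
\begin{equation*}
W_{g,t} = \sum_{j\le t}\rho_{g,j}, \qquad W_{g,t}\,q_{g,t} = \sum_{j\le t}\rho_{g,j}\,e_{g,j},
\end{equation*}
which holds for every $t\ge 0$, including steps where $W_{g,t}=0$. Where $W_{g,t}>0$, dividing the second identity by the first gives \cref{eq:exact}. The first identity is immediate from \cref{eq:update_w} by telescoping from $W_{g,0}=0$. The invariant is stated in multiplied form precisely so that it does not need $W_{g,t}>0$ and survives the early steps in which the primitive is unobserved.

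For the inductive step on the second identity, assume it at $t-1$ and split into two cases.

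\emph{Case $W_{g,t}=0$.} Since all $\rho_{g,j}\ge 0$, this forces every earlier $\rho$ to vanish, so both sides are zero. The convention $\eta_{g,t}=0$ keeps $q_{g,t}=q_{g,t-1}=0$.

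\emph{Case $W_{g,t}>0$.} Multiply \cref{eq:update_q} by $W_{g,t}$ and use $W_{g,t}\eta_{g,t}=\rho_{g,t}$ and $W_{g,t}(1-\eta_{g,t}) = W_{g,t}-\rho_{g,t} = W_{g,t-1}$. This gives
\begin{equation*}
W_{g,t}\,q_{g,t} = W_{g,t-1}\,q_{g,t-1} + \rho_{g,t}\,e_{g,t},
\end{equation*}
and the inductive hypothesis closes the step. If $W_{g,t-1}=0$, this remains valid because $q_{g,t-1}$ is then multiplied by zero.

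With \cref{eq:exact} in hand, the bound $q_{g,t}\in[0,1]$ follows because $q_{g,t}$ is a convex combination of the $e_{g,j}\in[0,1]$: the weights $\rho_{g,j}/W_{g,t}$ are nonnegative and sum to one. Alternatively, one can argue inductively from \cref{eq:update_q}, since $\eta_{g,t}\in[0,1]$ by $0\le\rho_{g,t}\le W_{g,t}$. The remaining claims are structural. The update at frame $t$ reads only $(q_{g,t-1},W_{g,t-1})$ and the current $(\rho_{g,t},e_{g,t})$, so the state is two scalars per primitive and no past observation is needed.

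The only delicate point is the degenerate handling when $W=0$, especially the first nonzero-support step, where $\eta_{g,t}=1$ and the arbitrary prior value of $q$ is overwritten. I would handle it through the multiplied invariant above rather than by separate bookkeeping, since it treats that step uniformly with the others.
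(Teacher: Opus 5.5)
Your proof is correct and follows essentially the same route as the paper: induction on $t$ with $W_{g,t}=W_{g,t-1}+\rho_{g,t}$ and $W_{g,t}(1-\eta_{g,t})=W_{g,t-1}$, then the convex-combination argument for $q_{g,t}\in[0,1]$. The only difference is bookkeeping. Your multiplied invariant $W_{g,t}q_{g,t}=\sum_{j\le t}\rho_{g,j}e_{g,j}$ absorbs two cases the paper treats separately: its base case at the first view with $\rho_{g,t}>0$ (where $\eta_{g,t}=1$), and its remark on views with $\rho_{g,t}=0$.
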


\begin{proof}
Write $S_{g,t} = \sum_{j\leq t}\rho_{g,j}e_{g,j}$ and $W_{g,t} = \sum_{j\leq t}\rho_{g,j}$, and proceed by induction on $t$. For the first view with $\rho_{g,t}>0$ we have $W_{g,t}=\rho_{g,t}$, hence $\eta_{g,t}=1$ and $q_{g,t}=e_{g,t}=S_{g,t}/W_{g,t}$. Assume $q_{g,t-1}=S_{g,t-1}/W_{g,t-1}$. Then, using $W_{g,t}=W_{g,t-1}+\rho_{g,t}$,
\[
q_{g,t}
= \frac{W_{g,t}-\rho_{g,t}}{W_{g,t}}\cdot\frac{S_{g,t-1}}{W_{g,t-1}} + \frac{\rho_{g,t}}{W_{g,t}}e_{g,t}
= \frac{S_{g,t-1}+\rho_{g,t}e_{g,t}}{W_{g,t}},
\]
which is $S_{g,t}/W_{g,t}$. Views with $\rho_{g,t}=0$ leave both $W_g$ and $q_g$ unchanged and also leave \cref{eq:exact} unchanged. Since \cref{eq:exact} is a convex combination of values in $[0,1]$, $q_{g,t}\in[0,1]$.
\end{proof}

\Cref{prop:exact} is what makes the formulation replay-free. Where O\nobreakdash-SCD refits every stored cue map at each frame, the fused estimate $q_{g,t}$ of \cref{eq:exact} is reached in closed form from two scalars per primitive and the current view alone. The recursion is the standard incremental form of a weighted mean, and it applies here because the transfer of \cref{sec:transfer} gives every view its own per-primitive measurement.

\begin{remark}[An exact aggregate, not an approximation]
\Cref{prop:exact} should not be read as claiming that the recursion approximates the replay-based optimum of \cref{eq:ssf_conf}. The two estimators are different, and we state ours explicitly: \cref{eq:exact} is the support-weighted mean of the per-view transfers. The claim is that this aggregate is well motivated, is computed exactly and, as \cref{sec:experiments} shows, is also more accurate.
\end{remark}

\subsection{Current-View Prediction}
\label{sec:prediction}

Rendering the updated attributes at the inference pose composites them back to image, $(\Amat\mathbf{q}_t)(p) = \sum_g A_t(p,g)\,q_{g,t}$. Since the compositing weights sum to coverage and $q_{g,t}\in[0,1]$, this render lies in $[0,a_t(p)]$. Scaling the current cue the same way and averaging the two gives the fused score
\begin{equation}
s_t(p) = \tfrac{1}{2}\Bigl[(\Amat\mathbf{q}_t)(p) + a_t(p)\,c_t(p)\Bigr].
\label{eq:score}
\end{equation}
Neither term can assert change where the reference has no surface, since both are bounded by $a_t(p)$. Coverage therefore enters the method twice with different meanings: in \cref{eq:adjoint} it decides which pixels count as a valid observation \emph{of a primitive}, and in \cref{eq:score} it measures how far a pixel admits a reference-based comparison. \Cref{sec:ablation} empirically shows that the two terms together outperform either alone.

\subsection{Exact Label-Free Decision}
\label{sec:decision}

The score $s_t$ is continuous, while the required output \cref{eq:task} is binary. We derive the cutoff from the self-supervised objective that O\nobreakdash-SCD minimizes by descent, \cref{eq:ssf_conf}, restated for a soft mask $\widetilde{M}$ and cue $c$ as
\begin{equation*}
\operatorname{mean}\!\left[2c \odot \bigl(1 - \widetilde{M}\bigr)\right] + \log\!\left(1 + \operatorname{mean}\bigl(\widetilde{M}\bigr)^{2}\right).
\end{equation*}
The first term rewards selecting pixels with strong cues and the second penalizes selecting too many. We use a structure of this objective that its original use leaves unexploited. Evaluated over binary masks rather than over a continuous field, the objective depends on a mask only through the number of pixels it selects and the cue mass they carry. This reduces the decision to a one-dimensional problem with an exact solution requiring no descent. A fixed cutoff is in any case no longer available: it suited O\nobreakdash-SCD only because descent through a sigmoid drove the field towards the extremes of its range, and computing the field in closed form removes that saturation.

\noindent\textbf{The objective over masks.} \Cref{eq:ssf_conf} already applies a sigmoid to the rendered field, so a binary mask $z:\Omega_t\to\{0,1\}$ enters it with rejected pixels responding $\sigma_0 = \sigma(0)$ and selected pixels responding $\sigma_1 = \sigma(1)$, where both constants come from the objective. With $\Delta = \sigma_1 - \sigma_0$, the cue scale restored as $\overline{C}_t = 2\,s_t \in [0,2]$, and $\bar{z} = k/N$ the selected fraction of a frame of $N$ pixels, \cref{eq:ssf_conf} becomes
\begin{equation}
\begin{split}
\mathcal{L}_t(z) = {} & \frac{1}{N}\sum_{p} \overline{C}_t(p)\bigl[1 - \sigma_0 - \Delta\,z(p)\bigr] \\
& + \log\!\left(1 + \bigl(\sigma_0 + \Delta\bar{z}\bigr)^{2}\right).
\end{split}
\label{eq:ssf_mask}
\end{equation}

\noindent\textbf{Exact minimization.} We restrict the output to what a single cutoff can produce, $z_\theta(p) = \mathbb{1}\bigl[\overline{C}_t(p) > \theta\bigr]$, and write $\mathcal{Z}_t$ for this family.

\begin{proposition}[Exactness of the decision]
\label{prop:decision}
Sort the scores $\overline{C}_{(1)} \geq \cdots \geq \overline{C}_{(N)}$ and let $Q_k = \sum_{i \leq k}\overline{C}_{(i)}$, with $Q_0 = 0$. Let $J_t$ denote \cref{eq:ssf_mask} evaluated at a threshold mask that selects the $k$ largest scores, with the term that does not depend on the mask dropped,
\begin{equation}
J_t(k) = -\frac{\Delta}{N}\,Q_k + \log\!\left(1 + \Bigl(\sigma_0 + \Delta\tfrac{k}{N}\Bigr)^{2}\right)
\label{eq:scan}
\end{equation}
and let $\mathcal{K}_t = \{0, N\} \cup \bigl\{k : \overline{C}_{(k)} > \overline{C}_{(k+1)}\bigr\}$. Then
\begin{equation}
\min_{z \in \mathcal{Z}_t} \mathcal{L}_t(z) = \min_{k \in \mathcal{K}_t} J_t(k) + \text{const},
\end{equation}
and a minimizer is attained at $\theta_t = \overline{C}_{(k^{*})}$ with $k^{*} \in \operatorname*{arg\,min}_{k \in \mathcal{K}_t} J_t(k)$.
\end{proposition}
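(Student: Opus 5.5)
The plan is to characterize the family $\mathcal{Z}_t$ explicitly, show that the objective on any of its masks equals $J_t(k)$ plus a constant, and then reduce the minimization over thresholds to a minimization over $k \in \mathcal{K}_t$.

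\textbf{Step 1: the masks in $\mathcal{Z}_t$ are prefixes of the sort.} For any $\theta$, the mask $z_\theta$ selects exactly the pixels with $\overline{C}_t(p) > \theta$. These form an upper set of the sorted list. The selected count $k(\theta) = |\{p : \overline{C}_t(p) > \theta\}|$ cannot split a block of tied scores: it is either $0$, or $N$, or an index $k$ with $\overline{C}_{(k)} > \theta \geq \overline{C}_{(k+1)}$. The last case forces $\overline{C}_{(k)} > \overline{C}_{(k+1)}$. Hence $k(\theta) \in \mathcal{K}_t$.

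Conversely, every $k \in \mathcal{K}_t$ is realized by some threshold:
\begin{itemize}
\item for $0<k<N$, any $\theta \in [\overline{C}_{(k+1)}, \overline{C}_{(k)})$ works, in particular $\theta = \overline{C}_{(k+1)}$;
\item $k=0$ is realized by $\theta = \overline{C}_{(1)}$;
\item $k=N$ is realized by any $\theta < \overline{C}_{(N)}$.
\end{itemize}
So $\mathcal{Z}_t$ is in bijection with $\mathcal{K}_t$, and $z_\theta$ selects precisely the indices $1,\dots,k$ in sorted order. Because ties are never split, this holds for every ordering of tied scores.

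\textbf{Step 2: evaluate the objective on a prefix mask.} For such a mask, $\sum_p \overline{C}_t(p)\, z(p) = Q_k$ and $\bar z = k/N$. Expanding \cref{eq:ssf_mask} gives
\[
\mathcal{L}_t(z) = \frac{1-\sigma_0}{N}\sum_p \overline{C}_t(p) - \frac{\Delta}{N}Q_k + \log\!\left(1+\bigl(\sigma_0+\Delta\tfrac{k}{N}\bigr)^2\right).
\]
The first term does not depend on the mask, so $\mathcal{L}_t(z) = J_t(k) + \text{const}$ with $\text{const} = \tfrac{1-\sigma_0}{N}\sum_p \overline{C}_t(p)$. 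Taking minima over the bijection of Step 1 yields the claimed identity.

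\textbf{Step 3: the threshold attaining the minimum.} This is the only delicate point, because the strict inequality in $z_\theta$ makes $\theta_t = \overline{C}_{(k^*)}$ taken literally select the indices strictly above $\overline{C}_{(k^*)}$. That set is the prefix ending just before the tie block containing index $k^*$, not the prefix of length $k^*$. I would handle this by reading the stated threshold as the boundary value of the interval $[\overline{C}_{(k^*+1)}, \overline{C}_{(k^*)})$, i.e. the cutoff between the selected score $\overline{C}_{(k^*)}$ and the next one. Equivalently, I would adopt the convention that pixels with $\overline{C}_t(p) \geq \theta_t$ are selected. Since $k^* \in \mathcal{K}_t$ sits at the end of a tie block, this $\geq$ rule selects exactly $k^*$ pixels. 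I would state this convention explicitly in the proof.

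For the computation itself, I would note that the sort and the prefix sums $Q_k$ cost $O(N\log N)$ and $O(N)$. Evaluating $J_t$ on $\mathcal{K}_t$ is then a single linear scan, which makes the minimizer exact rather than approximate. Discrete convexity is not needed here; that belongs to \cref{prop:convex}.
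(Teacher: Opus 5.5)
Your proof is correct. It uses the same two facts as the paper's proof: the objective sees a mask only through $\sum_p \overline{C}_t(p)z(p)$ and $k$, and thresholds realize exactly the cardinalities in $\mathcal{K}_t$. You order the argument differently, though.

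The paper first fixes $k$. It then uses $\Delta>0$ to show that the $k$ largest scores maximize $\sum_p \overline{C}_t(p)z(p)$ among \emph{all} binary masks of cardinality $k$. Only afterwards does it restrict $k$ to the values a cutoff can realize.

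You instead characterize $\mathcal{Z}_t$ first as the prefix masks of the sort. Then $\sum_p \overline{C}_t(p)z(p)=Q_k$ holds by construction, and you need neither the sign of $\Delta$ nor an exchange argument. For the statement as posed, which concerns only $\mathcal{Z}_t$, your route is the more economical one. The paper's extra step buys a stronger fact: a threshold mask is optimal among arbitrary masks of the same size. So restricting to thresholds costs nothing beyond the inability to split ties.

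Your Step 3 also catches an indexing slip that the paper's proof passes over. Both $z_\theta$ and \cref{eq:mask} use a strict inequality. With that rule, the literal choice $\theta_t=\overline{C}_{(k^*)}$ drops the $k^*$-th pixel together with its whole tie block, so it selects fewer than $k^*$ pixels.

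If you want to keep \cref{eq:mask} as written, the cleaner repair is:
\begin{itemize}
\item take $\theta_t=\overline{C}_{(k^*+1)}$ for $k^*<N$, which also covers $k^*=0$;
\item take any $\theta_t<\overline{C}_{(N)}$ for $k^*=N$.
\end{itemize}
Your $\geq$ convention also works, and it leaves $\mathcal{Z}_t$ unchanged as a set of masks. However, it changes the emitted rule. It also still needs a separate convention for $k^*=0$, because $\overline{C}_{(0)}$ is undefined there; any $\theta_t>\overline{C}_{(1)}$ would do.
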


\begin{proof}
Expanding \cref{eq:ssf_mask}, the mask enters only through $\sum_p \overline{C}_t(p)z(p)$ and its cardinality $k = \sum_p z(p)$:
\begin{equation*}
\begin{split}
\mathcal{L}_t(z) = {} & \underbrace{\frac{1-\sigma_0}{N}\sum_p \overline{C}_t(p)}_{\text{independent of } z} \;-\; \frac{\Delta}{N}\sum_p \overline{C}_t(p)\,z(p) \\
& +\; \log\!\left(1 + \Bigl(\sigma_0+\Delta\tfrac{k}{N}\Bigr)^{2}\right).
\end{split}
\end{equation*}
\emph{Fix $k$.} The third term is then constant, and since $\Delta>0$ the second is minimized by collecting as much cue as possible -- which $k$ pixels do precisely when they are the $k$ largest, with total $Q_k$. Hence $\min_{|z|=k}\mathcal{L}_t(z) = J_t(k) + \text{const}$, and the search over masks collapses to a search over $k$ alone.

\emph{Then vary $k$.} A cutoff realizes the $k$ largest scores unless it would have to split a tie at position $k$, so the cardinalities some $\theta$ can produce are exactly $\mathcal{K}_t$, and evaluating $J_t$ there is exhaustive over $\mathcal{Z}_t$. Sorting costs $O(N\log N)$; the cumulative sum, the tie test and the scan are each $O(N)$.
\end{proof}

\begin{proposition}[Discrete convexity]
\label{prop:convex}
$J_t$ is discretely convex on $\{0,1,\dots,N\}$: $J_t(k+1) - 2J_t(k) + J_t(k-1) \geq 0$ for all $0 < k < N$. Its minimizers form a contiguous interval.
\end{proposition}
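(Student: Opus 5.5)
The plan is to split $J_t$ from \cref{eq:scan} into its two terms, show that each has a nonnegative second difference, and add. Write $J_t(k) = L(k) + R(k)$ with $L(k) = -\tfrac{\Delta}{N}Q_k$ and $R(k) = g\bigl(\sigma_0 + \Delta k/N\bigr)$, where $g(u) = \log(1+u^2)$. Second differences are linear, so $J_t(k+1)-2J_t(k)+J_t(k-1)$ is the sum of the second differences of $L$ and $R$.

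\emph{The linear-in-cue term.} Since $Q_k$ is a prefix sum of the sorted scores, $Q_{k+1}-Q_k = \overline{C}_{(k+1)}$ and $Q_k - Q_{k-1} = \overline{C}_{(k)}$. The second difference of $L$ is therefore
\[
-\tfrac{\Delta}{N}\bigl(\overline{C}_{(k+1)} - \overline{C}_{(k)}\bigr).
\]
This is nonnegative because $\Delta = \sigma(1)-\sigma(0) > 0$ and the scores are sorted in nonincreasing order. Equivalently, $Q_k$ is discretely concave, because each added pixel carries no more cue than the previous one.

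\emph{The sparsity term.} This is the step that needs care, because $g$ is not convex on all of $\mathbb{R}$. A direct computation gives
\[
g''(u) = \frac{2(1-u^2)}{(1+u^2)^2},
\]
so $g$ is convex exactly on $[-1,1]$. The argument $u_k = \sigma_0 + \Delta k/N$ only runs over $[\sigma_0,\sigma_1] = [\sigma(0),\sigma(1)] \approx [0.5, 0.731] \subset [0,1]$, where $g'' \geq 0$. The points $u_{k-1}, u_k, u_{k+1}$ are equally spaced with step $h = \Delta/N$ inside this interval. The second difference of a function that is convex on an interval containing three equispaced points is nonnegative: it equals the integral $\int_{-h}^{h}(h-|s|)\,g''(u_k+s)\,ds$, or one can invoke the midpoint form of convexity directly. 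Hence $R(k+1)-2R(k)+R(k-1) \geq 0$. The point to stress is that the constants $\sigma_0$ and $\sigma_1$ fixed by the objective keep the argument inside the convex region of $g$. This is exactly the fact the argument hinges on.

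\emph{Contiguity of minimizers.} Adding the two bounds gives the second-difference inequality for all $0<k<N$. It follows that the forward differences $\delta_k = J_t(k+1)-J_t(k)$ are nondecreasing in $k$. Let $k_-$ be the smallest index with $\delta_{k_-} \geq 0$, or $N$ if none exists, and let $k_+$ be the smallest index with $\delta_{k_+} > 0$, or $N$ if none exists. Then $J_t$ strictly decreases up to $k_-$, is constant on $[k_-,k_+]$ because $\delta_k = 0$ there, and strictly increases afterwards. So the set of minimizers is exactly the integer interval $\{k_-,\dots,k_+\}$.
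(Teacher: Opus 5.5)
Your proof is correct and follows the paper's argument. You use the same split of $J_t$ into the prefix-sum term and the $\log(1+x^2)$ term: the first is convex because the sorted increments $\overline{C}_{(k)}$ are non-increasing, and the second is convex because its argument stays in $[\sigma_0,\sigma_1]$, where $g''>0$. You go slightly further than the paper, which leaves two steps implicit: you justify that the sampled convex function has non-negative second differences, and you prove that the minimizers are contiguous via monotone forward differences.
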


\begin{proof}
Both terms are convex in $k$. The increments of $Q_k$ are the sorted scores $\overline{C}_{(k)}$, which are non-increasing, so $Q_k$ is discretely concave and $-\tfrac{\Delta}{N}Q_k$ discretely convex. For the second, $g(x) = \log(1+x^2)$ has $g''(x) = 2(1-x^{2})/(1+x^{2})^{2} > 0$ whenever $|x| < 1$; its argument $x_k = \sigma_0 + \Delta k/N$ is affine in $k$ and lies between $\sigma_0$ and $\sigma_1$, which are sigmoid outputs and so below one. So $g$ is evaluated only where it is convex, and a sum of discretely convex functions is discretely convex.
\end{proof}

The scan of \cref{prop:decision} is exhaustive, so convexity is not what makes it exact; what it adds is that the threshold is well defined up to ties, and that the minimum could instead be located by ternary search in $O(\log N)$ evaluations. We retain the linear scan, since the sort already dominates the cost and the scan is a single fused pass on the GPU.

The emitted mask is
\begin{equation}
B_t(p) = \mathbb{1}\bigl[\overline{C}_t(p) > \theta_t\bigr],
\label{eq:mask}
\end{equation}
No ground-truth mask or hand-set cutoff enters the decision at any point.

It is worth being precise about what \cref{prop:decision} does and does not claim. The minimum is global with respect to \cref{eq:ssf_mask} and within the family of masks a single threshold can produce; it is not a claim about the evaluation metric. \Cref{eq:ssf_mask} is a label-free surrogate, so a cutoff selected against ground-truth masks can score higher, and a mask outside the threshold family can score higher still. \Cref{sec:ablation} reports an oracle threshold, chosen per frame to maximise intersection over union, as an upper bound. This gap is the price of deciding without annotation under a label-free setting.

\subsection{Decision Margins}
\label{sec:margin}

Because $\theta_t$ is an explicit optimum rather than a by-product of an implicitly optimized field, the distance of each pixel to it is available at no additional cost,
\begin{equation}
\mu_t(p) = \bigl\lvert s_t(p) - \tfrac{1}{2}\theta_t \bigr\rvert,
\label{eq:margin}
\end{equation}
expressed in the units of \cref{eq:score}. Pixels close to $\theta_t$ are those a slightly different cutoff would have labelled otherwise.

We note that $\mu_t$ does not modify $B_t$: \cref{eq:mask} is emitted unchanged, and the margin is an additional output. It is a selective-prediction signal~\cite{elyaniv2010foundations,geifman2017selective}: withholding the pixels of smallest margin raises accuracy on those that remain, at the cost of labelling fewer of them. Because it is a ranking rather than a calibrated probability, we assess it as such. \Cref{sec:selective} reports the sparsification behaviour -- how detection quality on the retained region improves as the least confident pixels are withheld -- against random withholding as a lower reference and an oracle ordering as an upper one.

\subsection{Complexity and Properties}
\label{sec:complexity}

Per frame, the change stage performs one probe render, one rasterizer-adjoint call, one render of the updated attributes, $O(G)$ primitive-wise updates, and the sort of \cref{eq:scan}:
\begin{equation}
\mathcal{T} = 2\,\mathcal{T}_{\mathrm{rast}} + \mathcal{T}_{\mathrm{adj}} + O(G) + O(N\log N).
\label{eq:complexity}
\end{equation}
The persistent memory is $O(G)$: two scalars per primitive. Two properties distinguish \cref{eq:complexity} from the cost of O\nobreakdash-SCD in \cref{eq:complexity_oscd}. It carries \emph{no multiplicative factor} for an optimization budget, so there is no quantity to trade detection quality against. And neither its time nor its memory \emph{grows} with the number $t$ of frames already processed, because no past frame participates.

\section{Experiments}
\label{sec:experiments}

\noindent\textbf{Datasets.} The goal of this work is scene-level change detection in complex real-world scenes that contain multiple concurrent changes. Following O\nobreakdash-SCD~\cite{galappaththige2025changes}, we therefore evaluate primarily on PASLCD~\cite{galappaththige2025multi}, a real-world multi-view SCD dataset of ten scenes comparable in extent to those of Mip-NeRF360~\cite{Barron2021MipNeRF3U}. Each scene contains multiple concurrent changes, both geometric and surface-level and at different scales, and is revisited twice, once under lighting similar to the reference capture and once under different lighting, giving 20 instances. The scenes cover indoor and outdoor environments and both 360$^\circ$ and front-facing capture. We further extend evaluation to CL-Splats~\cite{ackermann2025clsplats}, whose five scenes are smaller in extent than those of PASLCD and each contain a single object-level geometric change, and on ChangeSim~\cite{park2021changesim}, a simulated industrial warehouse dataset with geometric changes only.

\noindent\textbf{Baselines and Metrics.} We compare against publicly available online SCD methods including recent FastGeSCF~\cite{yeh2026towards}. These comprise ChangeSim~\cite{park2021changesim}; two baselines that pair ChangeSim's frame retrieval with the strongest pairwise detectors, CYWS2D~\cite{cyws2d} and GeSCF~\cite{kim2025towards}; the pose-agnostic AD methods OmniposeAD~\cite{zhou2023pad}, SplatPose~\cite{kruse2024splatpose} and SplatPose+~\cite{liu2024splatpose+}; and the conference version, O\nobreakdash-SCD~\cite{galappaththige2025changes}. Since PASLCD or CL-Splats provides no depth for the RGB-D SLAM front end of ChangeSim, its frame retrieval is given ground-truth poses. Supervised methods~\cite{yeh2026towards,cyws2d} use the publicly released checkpoints. We report O\nobreakdash-SCD as reproduced with its released code, which runs slightly faster than published ($11.8$ vs $11.2$\,FPS) at comparable accuracy ($0.489$ against $0.486$ mIoU).
Following O\nobreakdash-SCD~\cite{galappaththige2025changes} and prior work~\cite{galappaththige2025multi, lin2025robust, zhou2026d, huang2025c}, we report mean intersection over union (mIoU) and F1 score computed for change pixels in the ground-truth mask over the views. FPS denotes the number of frames that can be inferred per second. All experiments are conducted on an NVIDIA RTX 4090 GPU.

\subsection{Comparison with the State of the Art}
\label{sec:comparison}

\begin{figure*}[t]
\centering
\definecolor{qualfn}{RGB}{232,53,43}
\definecolor{qualfp}{RGB}{200,110,215}
\newcommand{\qualtile}[1]{\includegraphics[width=0.238\textwidth]{figures/qualitative/#1}}
\newcommand{\qualmask}[3]{%
\begin{tikzpicture}
\node[inner sep=0pt, anchor=south west] (img) at (0,0) {\qualtile{#1}};
\node[anchor=north east, inner sep=1.5pt, text=white, fill=black, fill opacity=0.6, text opacity=1, font=\fontsize{6}{7}\selectfont, align=right]
    at (img.north east) {IoU: #2\\F1: #3};
\end{tikzpicture}}
\newcommand{\qualrow}[7]{%
\rotatebox{90}{\parbox{0.134\textwidth}{\centering\scriptsize #1}} &
\qualtile{#2_ref.jpg} & \qualtile{#2_inf.jpg} &
\qualmask{#2_oscd.png}{#3}{#4} & \qualmask{#2_ours.png}{#5}{#6} \\[#7]}
\newcommand{\qualkey}[2]{\fcolorbox{black!60}{#1}{\rule{0pt}{3pt}\rule{9pt}{0pt}}~#2}
\setlength{\tabcolsep}{0.6pt}
\renewcommand{\arraystretch}{0}
\begin{tabular}{ccccc}
 & \footnotesize Pre-change (rendered) & \footnotesize Post-change & \footnotesize O-SCD~\cite{galappaththige2025changes} & \footnotesize O-SCD++ \\[3pt]
\qualrow{Porch}{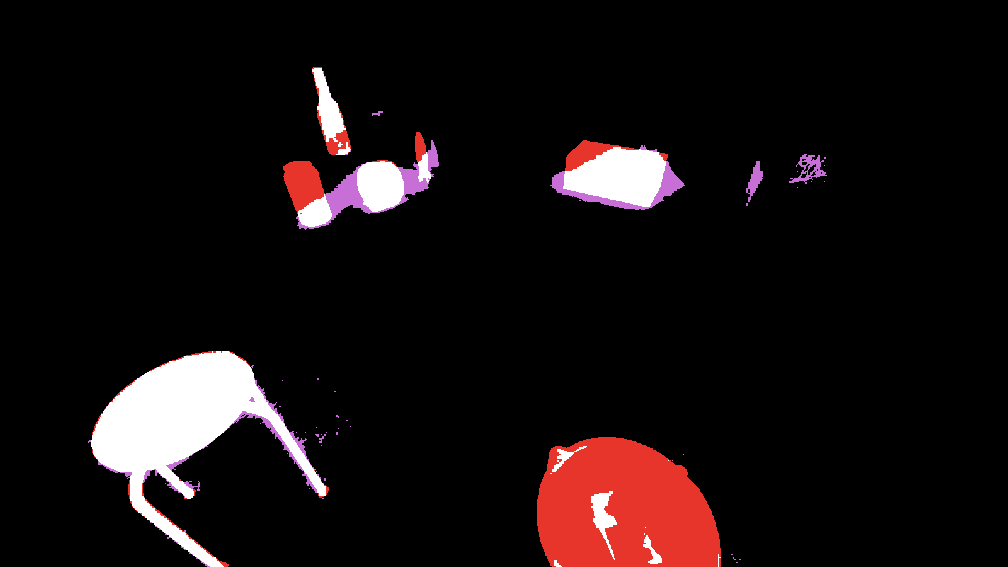}{0.486}{0.654}{0.815}{0.898}{1.2pt}
\qualrow{Zen}{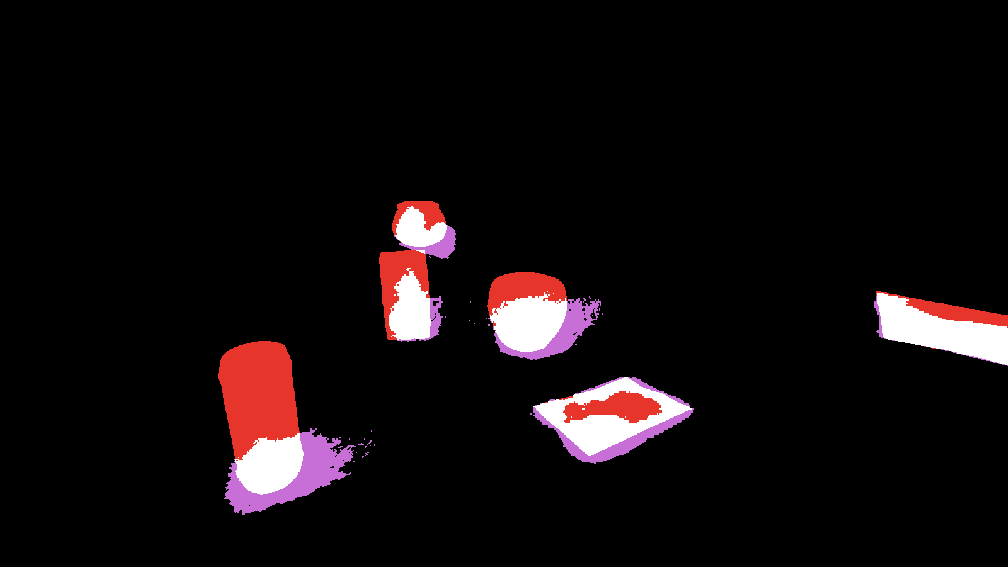}{0.440}{0.611}{0.764}{0.866}{1.2pt}
\qualrow{Garden}{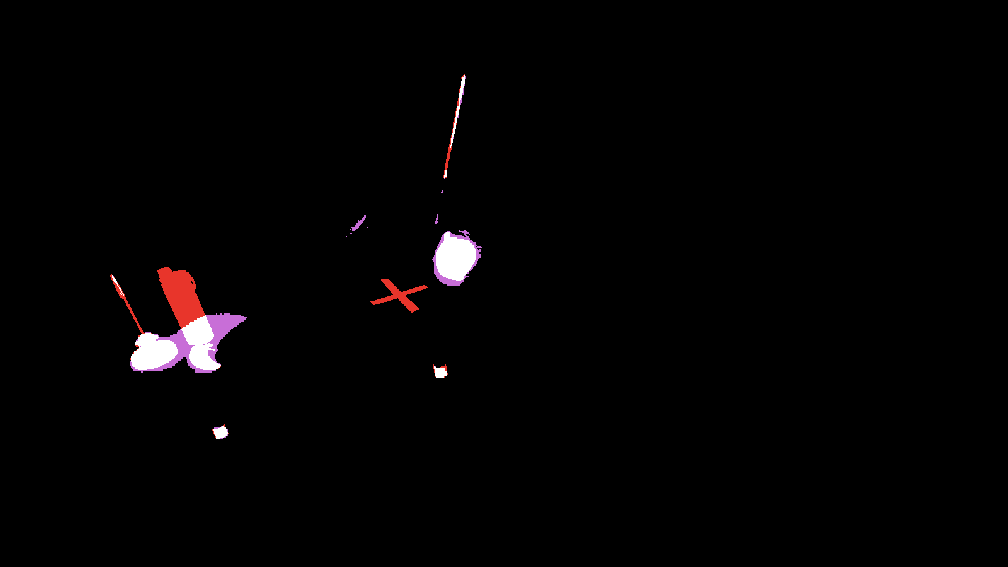}{0.472}{0.641}{0.739}{0.850}{1.2pt}
\qualrow{Lounge}{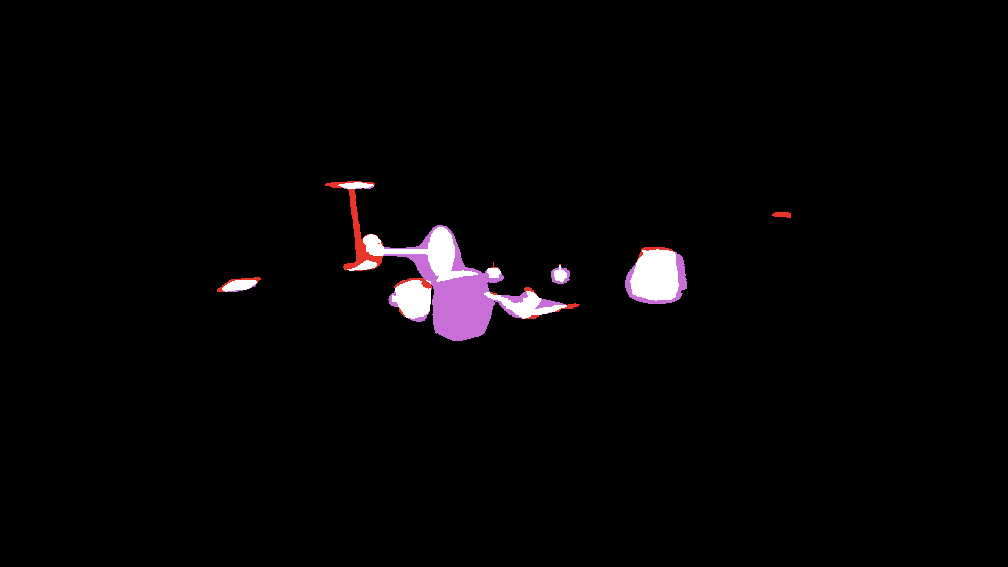}{0.478}{0.647}{0.744}{0.853}{1.2pt}
\qualrow{Meeting room}{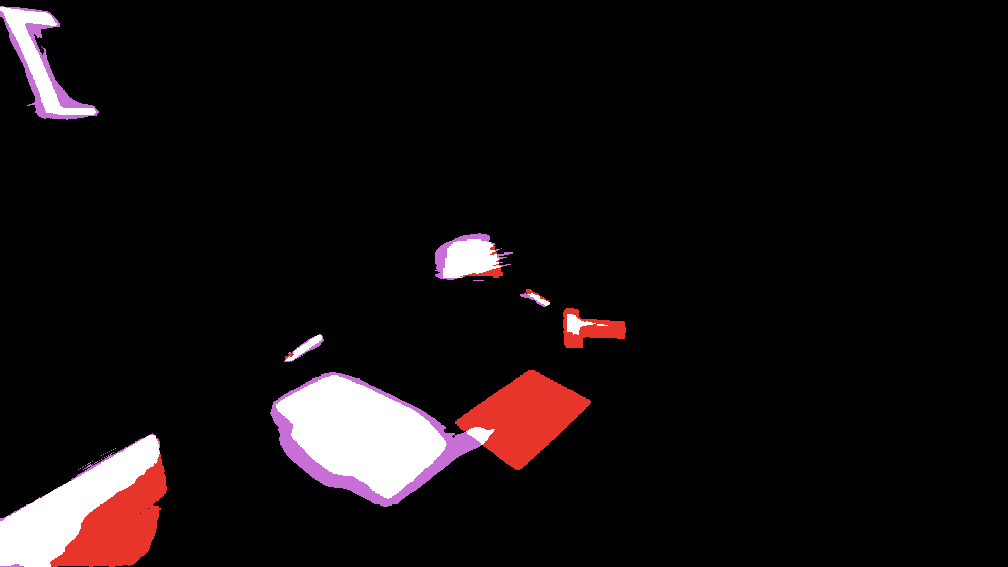}{0.532}{0.695}{0.710}{0.831}{1.2pt}
\qualrow{Lunch room}{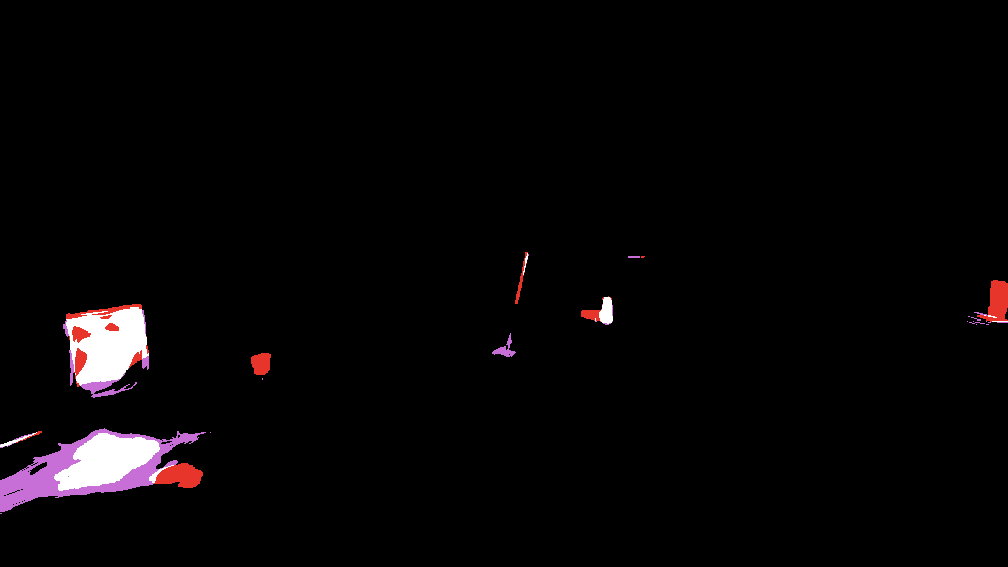}{0.485}{0.653}{0.662}{0.796}{0pt}
\end{tabular}

\vspace{4pt}
{\footnotesize
\qualkey{white}{True positive}\qquad
\qualkey{black}{True negative}\qquad
\qualkey{qualfn}{False negative}\qquad
\qualkey{qualfp}{False positive}}

\caption{Qualitative comparison with O\protect\nobreakdash-SCD~\cite{galappaththige2025changes} on PASLCD~\cite{galappaththige2025multi}. Each row shows the reference map rendered at the estimated pose of the inference image, the post-change inference image, and the change masks of O\protect\nobreakdash-SCD and O\protect\nobreakdash-SCD++, colour-coded against the ground truth, with per-frame IoU and F1. O\protect\nobreakdash-SCD++ also recovers particularly challenging surface-level changes that barely contrast with their surroundings: a grey cross on a grey chair (Garden), a white laminated sheet on a white table (Meeting room), and a water spill, transparent cup on a table (Lunch room).}
\label{fig:qualitative}
\end{figure*}

\begin{table}[t]
\centering
\footnotesize
\setlength{\tabcolsep}{2pt}
\caption{SCD results on PASLCD~\cite{galappaththige2025multi}, averaged over all 20 instances. LF: label-free; PA: pose-agnostic; MV: multi-view. The offline O\protect\nobreakdash-SCD result is included for reference. O\protect\nobreakdash-SCD++ achieves the state of the art perform in both accuracy and throughput. }
\label{tab:paslcd}
\begin{tabular*}{0.9\columnwidth}{@{\extracolsep{\fill}}lcccccc}
\toprule
Method & LF & PA & MV & mIoU & F1 & FPS \\
\midrule
O\nobreakdash-SCD (offline)~\cite{galappaththige2025changes} & \checkmark & \checkmark & \checkmark & 0.556 & 0.700 & -- \\
\midrule
ChangeSim (CS)~\cite{park2021changesim} & -- & -- & -- & 0.018 & 0.034 & 11.5 \\
CS+CYWS2D~\cite{cyws2d} & -- & -- & -- & 0.243 & 0.360 & 8.2 \\
CS+GeSCF~\cite{kim2025towards} & \checkmark & -- & -- & 0.181 & 0.270 & $<$1 \\
FastGeSCF~\cite{yeh2026towards} & -- & -- & -- & 0.122 & 0.198 & 1.43 \\
SplatPose~\cite{kruse2024splatpose} & \checkmark & \checkmark & -- & 0.173 & 0.281 & $<$1 \\
SplatPose+~\cite{liu2024splatpose+} & \checkmark & \checkmark & -- & 0.237 & 0.358 & $<$1 \\
O\nobreakdash-SCD~\cite{galappaththige2025changes} & \checkmark & \checkmark & \checkmark & 0.489 & 0.643 & 11.8 \\
O\nobreakdash-SCD++ & \checkmark & \checkmark & \checkmark & \textbf{0.575} & \textbf{0.715} & \textbf{25.6} \\
\bottomrule
\end{tabular*}
\end{table}

\Cref{tab:paslcd} compares O\nobreakdash-SCD++ with online SCD methods on PASLCD~\cite{galappaththige2025multi}. O\nobreakdash-SCD++ achieves the highest mIoU and F1 of all methods and the highest throughput, \emph{more than twice that of any existing approach}. \cref{fig:qualitative} shows a qualitative comparison.

Methods that retain the pairwise formulation perform poorly on these complex scenes. Combining the frame retrieval of ChangeSim with the strongest pairwise detectors reaches at most $0.243$ mIoU. The pose-agnostic AD methods remove the dependence on aligned trajectories but still infer change independently for each frame, reaching at most $0.237$ mIoU at below $1$\,FPS. Neither family exploits evidence across views, and the two multi-view methods, O\nobreakdash-SCD and O\nobreakdash-SCD++, lead both by a wide margin.

Supervision is a second factor. ChangeSim, whose change detector is trained on annotated change data, and FastGeSCF, whose pseudo-mask generator is, both perform below every label-free method; in each case the annotations come from a specific domain. The lone exception among supervised methods is CYWS2D. It is pretrained at large scale on pairs of real COCO images~\cite{lin2014microsoft} in which the changes are synthesized by inpainting or pasting objects, and an independent random affine transformation is applied to each image of a pair~\cite{cyws2d}. Its training thus combines large-scale real imagery with geometric misalignment between the two images. \Cref{sec:analysis} shows that its advantage is largely confined to outdoor scenes.

Compared with the conference version, O\nobreakdash-SCD, O\nobreakdash-SCD++ raises mIoU from $0.489$ to $0.575$, an $18\%$ relative improvement, while increasing throughput $2.2\times$, from $11.8$ to $25.6$\,FPS. It also exceeds the offline result of O\nobreakdash-SCD ($0.556$ mIoU, $0.700$ F1), obtained by observing the whole sequence before inferring any mask, while remaining causal.

\begin{table}[t]\centering
\footnotesize
\setlength{\tabcolsep}{2pt}
\caption{SCD results on CL-Splats~\cite{ackermann2025clsplats} and ChangeSim~\cite{park2021changesim}, averaged over scenes. O\protect\nobreakdash-SCD++ achieves the best change detection accuracy on both.}
\label{tab:clsplats_changesim}
\begin{tabular*}{0.9\columnwidth}{@{\extracolsep{\fill}}lcccc}
\toprule
 & \multicolumn{2}{c}{CL-Splats} & \multicolumn{2}{c}{ChangeSim} \\
\cmidrule(lr){2-3}\cmidrule(lr){4-5}
Method & mIoU & F1 & mIoU & F1 \\
\midrule
ChangeSim (CS)~\cite{park2021changesim} & 0.010 & 0.019 & 0.130 & 0.212 \\
CS+CYWS2D~\cite{cyws2d} & 0.372 & 0.519 & 0.271 & 0.408 \\
CS+GeSCF~\cite{kim2025towards} & 0.520 & 0.630 & 0.348 & 0.484 \\
FastGeSCF~\cite{yeh2026towards} & 0.183 & 0.252 & 0.315 & 0.458 \\
SplatPose~\cite{kruse2024splatpose} & 0.224 & 0.320 & 0.061 & 0.108 \\
SplatPose+~\cite{liu2024splatpose+} & 0.556 & 0.700 & 0.115 & 0.186 \\
O\nobreakdash-SCD~\cite{galappaththige2025changes} & 0.657 & 0.773 & 0.404 & 0.535 \\
O\nobreakdash-SCD++ & \textbf{0.703} & \textbf{0.807} & \textbf{0.474} & \textbf{0.605} \\
\bottomrule
\end{tabular*}
\end{table}

The same ordering holds on CL-Splats and ChangeSim (\cref{tab:clsplats_changesim}), although the margin over methods without multi-view fusion narrows. Each CL-Splats scene contains a single, well-defined object change in a scene of small extent, close to the setting that pairwise detectors and pose-agnostic AD methods are designed for: CS+GeSCF reaches $0.520$ mIoU and SplatPose+ $0.556$, against at most $0.243$ on PASLCD. On ChangeSim, the revisit closely follows the reference trajectory, so the retrieved reference frame is nearly aligned with the inference image and the near-identical-viewpoint assumption of pairwise methods largely holds. Nevertheless, O\nobreakdash-SCD and O\nobreakdash-SCD++ surpass all baselines on both datasets, and O\nobreakdash-SCD++ achieves the highest mIoU and F1 on CL-Splats and ChangeSim. On ChangeSim, the absolute accuracy of both methods is limited by the pose front end rather than the change stage. XFeat~\cite{potje2024xfeat}, designed for real images, matches less reliably on the synthetic imagery, and some inference images fail to register. Each unregistered image receives an empty mask that scores zero IoU, which lowers the overall accuracy. Appendix~\ref{app:changesim} shows that a small modification to the pose front end, lifting the reference keypoints with depth rendered from the reference map instead of triangulating them, registers every image and raises O\nobreakdash-SCD++ to $0.558$ mIoU while preserving its margin over O\nobreakdash-SCD.

\subsection{Ablations}
\label{sec:ablation}

\begin{table}[t]
\centering
\footnotesize
\setlength{\tabcolsep}{2pt}
\caption{Ablations of O\protect\nobreakdash-SCD++ on PASLCD~\cite{galappaththige2025multi}. Each block varies one component while the others are fixed. The oracle selects, per frame, the threshold that maximizes IoU against the ground truth and serves as an upper bound.}
\label{tab:ablation}
\begin{tabular*}{0.9\columnwidth}{@{\extracolsep{\fill}}lcc}
\toprule
Variant & mIoU & F1 \\
\midrule
\multicolumn{3}{l}{\emph{Current-view prediction (\cref{sec:prediction})}} \\
Cue only (no fusion) & 0.454 & 0.595 \\
Map only & 0.526 & 0.676 \\
Map and cue (ours) & \textbf{0.575} & \textbf{0.715} \\
\midrule
\multicolumn{3}{l}{\emph{Aggregation weight (\cref{sec:aggregation})}} \\
Unbounded mass, $\rho_{g,t}=m_{g,t}$ & 0.562 & 0.701 \\
Bounded weight, \cref{eq:rho} (ours) & \textbf{0.575} & \textbf{0.715} \\
\midrule
\multicolumn{3}{l}{\emph{Decision rule (\cref{sec:decision})}} \\
Fixed cutoff $0.5$~\cite{galappaththige2025changes} & 0.522 & 0.671 \\
Otsu~\cite{otsu1979threshold} & 0.313 & 0.422 \\
Exact scan (ours) & \textbf{0.575} & \textbf{0.715} \\
Oracle, per frame (not label-free) & \emph{0.640} & \emph{0.770} \\
\bottomrule
\end{tabular*}
\end{table}

\noindent\textbf{Current-view prediction.} \Cref{tab:ablation} shows that neither term of \cref{eq:score} suffices alone. Deciding from the current cue alone, that is, without fusion, reaches $0.454$ mIoU, and the fused map alone reaches $0.526$; averaging the two gives $0.575$.

\noindent\textbf{Aggregation weight.} Weighting each observation by the unbounded mass instead of the bounded measure of \cref{eq:rho} lowers mIoU from $0.575$ to $0.562$, as one large observation of a primitive then outweighs many moderate ones of another.

\noindent\textbf{Decision rule.} We compare the exact decision of \cref{sec:decision} with two label-free alternatives applied to the same scores: the fixed cutoff of $0.5$ used by O\nobreakdash-SCD and Otsu's method~\cite{otsu1979threshold} and ours improves over both. Otsu's method assumes a bimodal score distribution and fails because changed pixels occupy a small fraction of each image. A per-frame oracle threshold chosen with ground truth reaches $0.640$ mIoU; the gap is the cost of deciding without labels.

\subsection{Analysis}
\label{sec:analysis}

\noindent\textbf{Indoor and outdoor scenes.} \Cref{tab:indoor_outdoor} splits PASLCD into its indoor and outdoor scenes, comparing O\nobreakdash-SCD++ with O\nobreakdash-SCD and the two strongest online baselines of \cref{tab:paslcd}, CS+CYWS2D and SplatPose+. O\nobreakdash-SCD++ performs almost identically in both settings ($0.573$ and $0.576$ mIoU), as does O\nobreakdash-SCD. CS+CYWS2D, in contrast, nearly halves from $0.319$ mIoU outdoors to $0.167$ indoors. An affine transformation approximates the difference between two viewpoints well only when scene depth is large relative to the camera displacement, as in outdoor scenes; the close-range geometry of indoor scenes instead produces strong parallax between an inference image and its retrieved reference frame, which is likely outside the affine misalignment that CYWS2D is trained on. SplatPose+ also degrades indoors, though less ($0.260$ to $0.213$).

\noindent\textbf{Lighting.} \Cref{tab:lighting} separates the instances revisited under lighting similar to the reference capture from those revisited under different lighting. O\nobreakdash-SCD++ is the most accurate method under both conditions, reaching $0.602$ and $0.548$ mIoU, and exceeds offline O\nobreakdash-SCD in each. Under different lighting, O\nobreakdash-SCD++ even surpasses O\nobreakdash-SCD under similar lighting ($0.548$ against $0.506$ mIoU). Different lighting lowers its mIoU by $0.054$, compared with $0.034$ for O\nobreakdash-SCD, while SplatPose+ is essentially unaffected at a much lower level of accuracy.

\begin{table}[t]
\centering
\footnotesize
\setlength{\tabcolsep}{2pt}
\caption{Indoor and outdoor scenes of PASLCD~\cite{galappaththige2025multi} for the two strongest online baselines of \cref{tab:paslcd}. O\protect\nobreakdash-SCD++ along with O\protect\nobreakdash-SCD perform similarly in both indoor and outdoor scenes.}
\label{tab:indoor_outdoor}
\begin{tabular*}{0.9\columnwidth}{@{\extracolsep{\fill}}lcccc}
\toprule
 & \multicolumn{2}{c}{Indoor} & \multicolumn{2}{c}{Outdoor} \\
\cmidrule(lr){2-3}\cmidrule(lr){4-5}
Method & mIoU & F1 & mIoU & F1 \\
\midrule
O\nobreakdash-SCD (offline)~\cite{galappaththige2025changes} & 0.555 & 0.706 & 0.557 & 0.694 \\
\midrule
CS+CYWS2D~\cite{cyws2d} & 0.167 & 0.247 & 0.319 & 0.448 \\
SplatPose+~\cite{liu2024splatpose+} & 0.213 & 0.326 & 0.260 & 0.390 \\
O\nobreakdash-SCD~\cite{galappaththige2025changes} & 0.484 & 0.641 & 0.494 & 0.645 \\
O\nobreakdash-SCD++ & \textbf{0.573} & \textbf{0.718} & \textbf{0.576} & \textbf{0.713} \\
\bottomrule
\end{tabular*}
\end{table}

\begin{table}[t]
\centering
\footnotesize
\setlength{\tabcolsep}{2pt}
\caption{Results under lighting similar to, and different from, the reference capture, for the two strongest online baselines of \cref{tab:paslcd} on PASLCD~\cite{galappaththige2025multi}. O\protect\nobreakdash-SCD++ achieves the best accuracy under both lighting conditions.}
\label{tab:lighting}
\begin{tabular*}{0.9\columnwidth}{@{\extracolsep{\fill}}lcccc}
\toprule
 & \multicolumn{2}{c}{Similar lighting} & \multicolumn{2}{c}{Different lighting} \\
\cmidrule(lr){2-3}\cmidrule(lr){4-5}
Method & mIoU & F1 & mIoU & F1 \\
\midrule
O\nobreakdash-SCD (offline)~\cite{galappaththige2025changes} & 0.580 & 0.718 & 0.532 & 0.681 \\
\midrule
CS+CYWS2D~\cite{cyws2d} & 0.258 & 0.378 & 0.229 & 0.342 \\
SplatPose+~\cite{liu2024splatpose+} & 0.236 & 0.357 & 0.237 & 0.359 \\
O\nobreakdash-SCD~\cite{galappaththige2025changes} & 0.506 & 0.656 & 0.472 & 0.630 \\
O\nobreakdash-SCD++ & \textbf{0.602} & \textbf{0.736} & \textbf{0.548} & \textbf{0.694} \\
\bottomrule
\end{tabular*}
\end{table}

\begin{table}[t]
\centering
\footnotesize
\setlength{\tabcolsep}{2pt}
\caption{Online, retrospective and offline inference on PASLCD~\cite{galappaththige2025multi}. Retrospective inference re-infers every frame from the state reached after the last frame; offline O\protect\nobreakdash-SCD instead continues to refine its change representation up to $3000$ iterations. }
\label{tab:retrospective}
\begin{tabular*}{0.9\columnwidth}{@{\extracolsep{\fill}}lcccc}
\toprule
Method & Setting & Extra optimization & mIoU & F1 \\
\midrule
O\nobreakdash-SCD~\cite{galappaththige2025changes} & online & -- & 0.489 & 0.643 \\
O\nobreakdash-SCD~\cite{galappaththige2025changes} & retrospective & none & 0.511 & 0.662 \\
O\nobreakdash-SCD~\cite{galappaththige2025changes} & offline & refinement & 0.556 & 0.700 \\
\midrule
O\nobreakdash-SCD++ & online & -- & 0.575 & 0.715 \\
O\nobreakdash-SCD++ & retrospective & none & \textbf{0.586} & \textbf{0.726} \\
\bottomrule
\end{tabular*}
\end{table}

\noindent\textbf{Retrospective inference.} An online method emits each mask when its frame arrives, from the evidence observed up to that frame. Once the sequence ends, every earlier frame can be inferred again from the final state; we call this retrospective inference. For O\nobreakdash-SCD++, the final state is the exact aggregate over all frames (\cref{prop:exact}), so retrospective inference already yields its offline result, without further optimization. O\nobreakdash-SCD reaches its offline result only by continuing to refine its change representation up to $3000$ iterations~\cite{galappaththige2025changes}. \Cref{tab:retrospective} compares the three settings. Retrospective inference improves both methods. The gap between online and offline inference narrows from $0.067$ mIoU for O\nobreakdash-SCD to $0.011$ for O\nobreakdash-SCD++. Online O\nobreakdash-SCD++ already exceeds offline O\nobreakdash-SCD ($0.556$ mIoU), and retrospective inference extends its lead to $0.030$ mIoU without any optimization. \Cref{fig:retrospective} shows examples: the final state suppresses spurious detections that the evidence observed online could not yet rule out.

\begin{figure*}[t]
\centering
\definecolor{qualfn}{RGB}{232,53,43}
\definecolor{qualfp}{RGB}{200,110,215}
\newcommand{\retrokey}[2]{\fcolorbox{black!60}{#1}{\rule{0pt}{3pt}\rule{9pt}{0pt}}~#2}
\newcommand{\retrotile}[1]{\includegraphics[width=0.238\textwidth]{figures/retrospective/#1}}
\newcommand{\retromask}[4]{%
\begin{tikzpicture}
\node[inner sep=0pt, anchor=south west] (img) at (0,0) {\retrotile{#1}};
\node[anchor=#4, inner sep=1.5pt, text=white, fill=black, fill opacity=0.6, text opacity=1, font=\fontsize{6}{7}\selectfont, align=left]
    at (img.#4) {IoU: #2\\F1: #3};
\end{tikzpicture}}
\newcommand{\retrorow}[8]{%
\rotatebox{90}{\parbox{0.134\textwidth}{\centering\scriptsize #1}} &
\retrotile{#2_ref.jpg} & \retrotile{#2_inf.jpg} &
\retromask{#2_online.png}{#3}{#4}{#8} & \retromask{#2_retro.png}{#5}{#6}{#8} \\[#7]}
\setlength{\tabcolsep}{0.6pt}
\renewcommand{\arraystretch}{0}
\begin{tabular}{ccccc}
 & \footnotesize Pre-change (rendered) & \footnotesize Post-change & \footnotesize O-SCD++ (online) & \footnotesize O-SCD++ (retrospective) \\[3pt]
\retrorow{Pots}{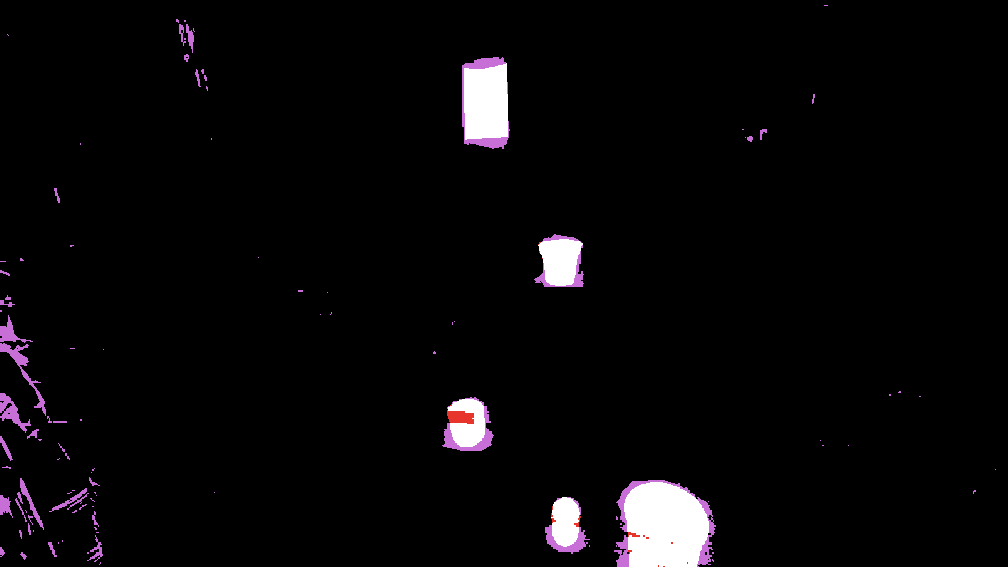}{0.644}{0.783}{0.741}{0.851}{1.2pt}{north east}
\retrorow{Lunch room}{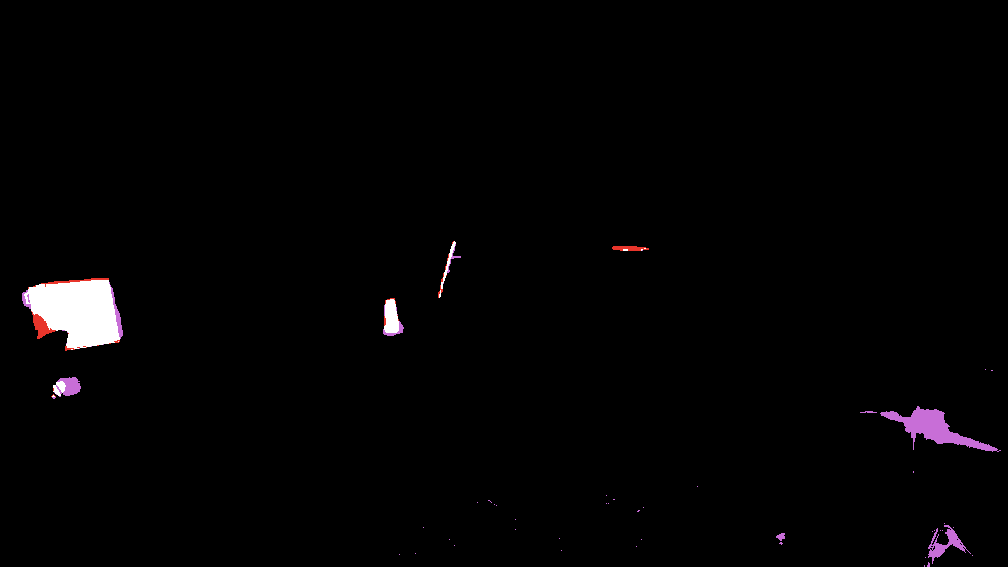}{0.582}{0.736}{0.673}{0.805}{0pt}{north east}
\end{tabular}

\vspace{4pt}
{\footnotesize
\retrokey{white}{True positive}\qquad
\retrokey{black}{True negative}\qquad
\retrokey{qualfn}{False negative}\qquad
\retrokey{qualfp}{False positive}}

\caption{Retrospective inference with O\protect\nobreakdash-SCD++ on PASLCD~\cite{galappaththige2025multi}. Online masks are emitted when each frame arrives; retrospective masks re-infer the same frames from the state reached after the last frame. The final state suppresses spurious detections that the evidence available online could not yet rule out.}
\label{fig:retrospective}
\end{figure*}

\begin{figure*}[t]
\centering
\definecolor{qualfn}{RGB}{232,53,43}
\definecolor{qualfp}{RGB}{200,110,215}
\newcommand{\unckey}[2]{\fcolorbox{black!60}{#1}{\rule{0pt}{3pt}\rule{9pt}{0pt}}~#2}
\newcommand{\unctile}[1]{\includegraphics[width=0.19\textwidth]{figures/uncertainty/#1}}
\newcommand{\uncmask}[4]{%
\begin{tikzpicture}
\node[inner sep=0pt, anchor=south west] (img) at (0,0) {\unctile{#1}};
\node[anchor=#4, inner sep=1.5pt, text=white, fill=black, fill opacity=0.6, text opacity=1, font=\fontsize{6}{7}\selectfont, align=left]
    at (img.#4) {IoU: #2\\F1: #3};
\end{tikzpicture}}
\newcommand{\uncrow}[8]{%
\rotatebox{90}{\parbox{0.107\textwidth}{\centering\scriptsize #1}} &
\unctile{#2_ref.jpg} & \unctile{#2_inf.jpg} & \unctile{#2_unc.jpg} &
\uncmask{#2_wo.png}{#3}{#4}{#8} & \uncmask{#2_with.png}{#5}{#6}{#8} \\[#7]}
\setlength{\tabcolsep}{0.6pt}
\renewcommand{\arraystretch}{0}
\begin{tabular}{cccccc}
 & \footnotesize Pre-change (rendered) & \footnotesize Post-change & \footnotesize Photometric uncertainty & \footnotesize O-SCD++ & \footnotesize O-SCD++ + uncertainty \\[3pt]
\uncrow{Zen}{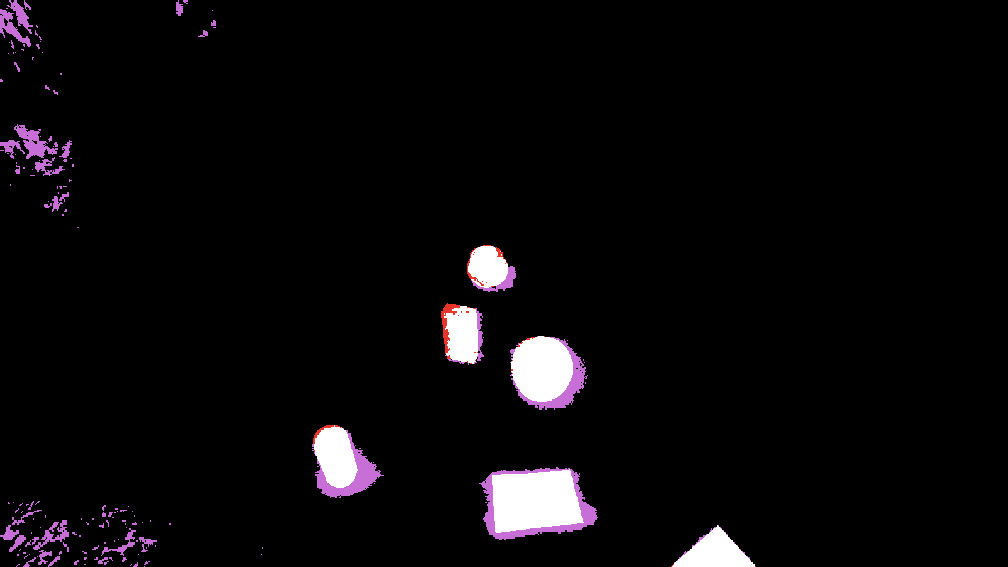}{0.513}{0.678}{0.608}{0.756}{1.2pt}{north east}
\uncrow{Meeting room}{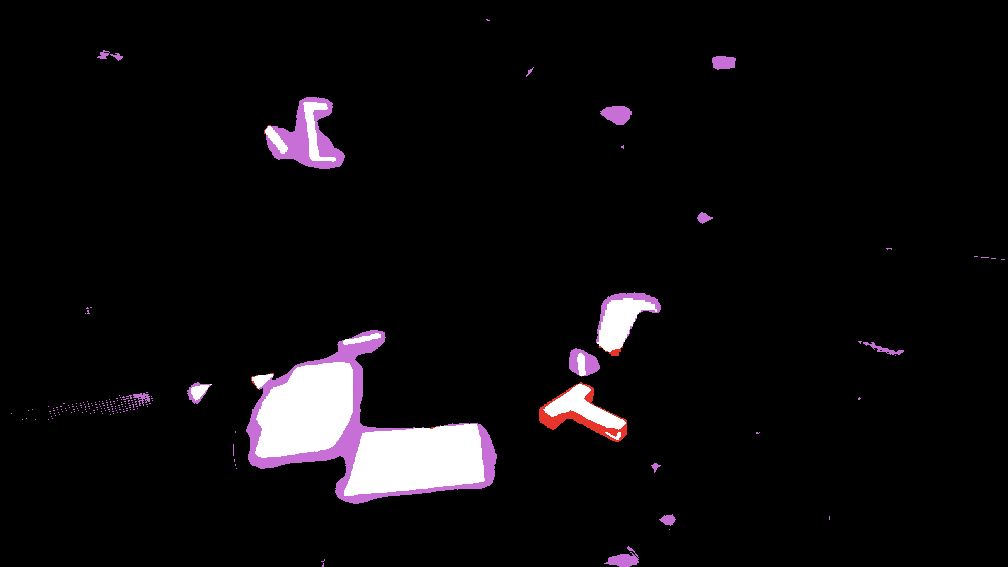}{0.556}{0.715}{0.609}{0.757}{0pt}{north east}
\end{tabular}

\vspace{4pt}
{\footnotesize
\unckey{white}{True positive}\qquad
\unckey{black}{True negative}\qquad
\unckey{qualfn}{False negative}\qquad
\unckey{qualfp}{False positive}}

\caption{O\protect\nobreakdash-SCD++ with and without predictive photometric uncertainty~\cite{galappaththige2026predictive} on PASLCD~\cite{galappaththige2025multi}. The rendered uncertainty (blue: low, red: high) is high where the reference map is poorly reconstructed; attenuating the cue there removes false positives caused by rendering artifacts~\cite{galappaththige2026pixels}.}
\label{fig:uncertainty}
\end{figure*}

\begin{table}[t]
\centering
\footnotesize
\setlength{\tabcolsep}{2pt}
\caption{Effect of predictive photometric uncertainty~\cite{galappaththige2026predictive} guidance results on PASLCD~\cite{galappaththige2025multi}, CL-Splats~\cite{ackermann2025clsplats} and ChangeSim~\cite{park2021changesim}. The rendered uncertainty attenuates the change cue where the reference scene rendering is unreliable, improving the change detection accuracy.}
\label{tab:uncertainty}
\begin{tabular*}{0.9\columnwidth}{@{\extracolsep{\fill}}llcccc}
\toprule
 & & \multicolumn{2}{c}{Without} & \multicolumn{2}{c}{With uncertainty} \\
\cmidrule(lr){3-4}\cmidrule(lr){5-6}
Dataset & Method & mIoU & F1 & mIoU & F1 \\
\midrule
PASLCD & O\nobreakdash-SCD~\cite{galappaththige2025changes} & 0.489 & 0.643 & 0.493 & 0.648 \\
 & O\nobreakdash-SCD++ & 0.575 & 0.715 & \textbf{0.602} & \textbf{0.741} \\
\midrule
CL-Splats & O\nobreakdash-SCD~\cite{galappaththige2025changes} & 0.657 & 0.773 & 0.688 & 0.796 \\
 & O\nobreakdash-SCD++ & 0.703 & 0.807 & \textbf{0.738} & \textbf{0.831} \\
\midrule
ChangeSim & O\nobreakdash-SCD~\cite{galappaththige2025changes} & 0.404 & 0.535 & 0.419 & 0.553 \\
 & O\nobreakdash-SCD++ & 0.474 & 0.605 & \textbf{0.486} & \textbf{0.615} \\
\bottomrule
\end{tabular*}
\end{table}

\begin{table}[t]
\centering
\footnotesize
\setlength{\tabcolsep}{2pt}
\caption{Per-frame processing time of each module on PASLCD~\cite{galappaththige2025multi}, in milliseconds and as a share of the frame, measured on an NVIDIA RTX~4090 at a resolution of about $1008\times567$ (scene-dependent).}
\label{tab:runtime}
\begin{tabular*}{0.9\columnwidth}{@{\extracolsep{\fill}}lcccc}
\toprule
 & \multicolumn{2}{c}{O\nobreakdash-SCD~\cite{galappaththige2025changes}} & \multicolumn{2}{c}{O\nobreakdash-SCD++} \\
\cmidrule(lr){2-3}\cmidrule(lr){4-5}
Module & ms & \% & ms & \% \\
\midrule
\textbf{Pose estimation} & \textbf{30.4} & \textbf{36.0\%} & \textbf{30.2} & \textbf{77\%} \\
\midrule
\textbf{Change detection} & \textbf{54.4} & \textbf{64.0\%} & \textbf{9.0} & \textbf{23\%} \\
\quad Change cue generation & 6.1 & 7.2\% & 5.9 & 15.0\% \\
\quad Multi-view fusion & 47.8 & 56.5\% & 2.1 & 5.5\% \\
\quad Mask inference & 0.5 & 0.3\% & 1.0 & 2.5\% \\
\bottomrule
\end{tabular*}
\end{table}

\begin{figure*}[t]
\centering
\definecolor{spred}{RGB}{214,39,40}
\definecolor{spblue}{RGB}{31,119,180}
\begin{tikzpicture}
\begin{axis}[
    width=8.6cm, height=5.2cm,
    title={O-SCD~\cite{galappaththige2025changes}}, title style={font=\footnotesize, yshift=-1ex},
    xlabel={Fraction of pixels withheld}, ylabel={mIoU on retained pixels},
    xmin=0, xmax=0.5, ymin=0, ymax=1.02,
    xtick={0,0.1,0.2,0.3,0.4,0.5}, ytick={0,0.2,0.4,0.6,0.8,1.0},
    grid=major, grid style={line width=0.3pt, draw=gray!30},
    tick label style={font=\footnotesize}, label style={font=\footnotesize},
    legend style={font=\fontsize{6}{7}\selectfont, at={{(0.97,0.525)}}, anchor=south east, draw=black!50, fill=white, fill opacity=0.9, text opacity=1, inner sep=1pt, row sep=0pt},
    legend cell align=left,
]
\addplot[color=black, dashed, line width=0.9pt] coordinates {(0.00,0.4893) (0.02,0.8874) (0.04,0.9596) (0.06,0.9815) (0.08,0.9891) (0.10,0.9912) (0.12,0.9940) (0.14,0.9947) (0.16,0.9952) (0.18,0.9962) (0.20,0.9962) (0.22,0.9962) (0.24,0.9962) (0.26,0.9963) (0.28,0.9964) (0.30,0.9965) (0.32,0.9967) (0.34,0.9978) (0.36,0.9987) (0.38,0.9998) (0.40,1.0000) (0.42,1.0000) (0.44,1.0000) (0.46,1.0000) (0.48,1.0000) (0.50,1.0000)};
\addlegendentry{oracle (errors first)}
\addplot[color=spred, line width=1.1pt] coordinates {(0.00,0.4893) (0.02,0.5116) (0.04,0.4566) (0.06,0.3918) (0.08,0.3472) (0.10,0.3031) (0.12,0.2573) (0.14,0.2234) (0.16,0.1927) (0.18,0.1678) (0.20,0.1448) (0.22,0.1237) (0.24,0.1103) (0.26,0.1058) (0.28,0.1020) (0.30,0.0973) (0.32,0.0935) (0.34,0.0859) (0.36,0.0782) (0.38,0.0723) (0.40,0.0667) (0.42,0.0646) (0.44,0.0639) (0.46,0.0647) (0.48,0.0693) (0.50,0.0688)};
\addlegendentry{distance to fixed cutoff $0.5$ {\fontsize{5}{6}\selectfont(AUSE $0.801$)}}
\addplot[color=spblue, line width=1.1pt] coordinates {(0.00,0.4893) (0.02,0.4905) (0.04,0.4916) (0.06,0.4928) (0.08,0.4936) (0.10,0.4943) (0.12,0.4950) (0.14,0.4957) (0.16,0.4964) (0.18,0.4972) (0.20,0.4981) (0.22,0.4988) (0.24,0.4996) (0.26,0.5003) (0.28,0.5011) (0.30,0.5020) (0.32,0.5028) (0.34,0.5038) (0.36,0.5047) (0.38,0.5057) (0.40,0.5067) (0.42,0.5079) (0.44,0.5092) (0.46,0.5107) (0.48,0.5121) (0.50,0.5133)};
\addlegendentry{photometric uncertainty {\fontsize{5}{6}\selectfont(AUSE $0.480$)}}
\addplot[color=black!60, dotted, line width=0.9pt] coordinates {(0.00,0.4893) (0.02,0.4893) (0.04,0.4892) (0.06,0.4892) (0.08,0.4892) (0.10,0.4892) (0.12,0.4892) (0.14,0.4892) (0.16,0.4893) (0.18,0.4892) (0.20,0.4892) (0.22,0.4891) (0.24,0.4891) (0.26,0.4891) (0.28,0.4892) (0.30,0.4892) (0.32,0.4892) (0.34,0.4891) (0.36,0.4891) (0.38,0.4893) (0.40,0.4892) (0.42,0.4893) (0.44,0.4893) (0.46,0.4892) (0.48,0.4893) (0.50,0.4892)};
\addlegendentry{random {\fontsize{5}{6}\selectfont(AUSE $0.491$)}}
\end{axis}
\end{tikzpicture}
\hfill
\begin{tikzpicture}
\begin{axis}[
    width=8.6cm, height=5.2cm,
    title={O-SCD++}, title style={font=\footnotesize, yshift=-1ex},
    xlabel={Fraction of pixels withheld}, ylabel={mIoU on retained pixels},
    xmin=0, xmax=0.5, ymin=0, ymax=1.02,
    xtick={0,0.1,0.2,0.3,0.4,0.5}, ytick={0,0.2,0.4,0.6,0.8,1.0},
    grid=major, grid style={line width=0.3pt, draw=gray!30},
    tick label style={font=\footnotesize}, label style={font=\footnotesize},
    legend style={font=\fontsize{6}{7}\selectfont, at={{(0.97,0.03)}}, anchor=south east, draw=black!50, fill=white, fill opacity=0.9, text opacity=1, inner sep=1pt, row sep=0pt},
    legend cell align=left,
]
\addplot[color=black, dashed, line width=0.9pt] coordinates {(0.00,0.5746) (0.02,0.9196) (0.04,0.9742) (0.06,0.9851) (0.08,0.9882) (0.10,0.9920) (0.12,0.9925) (0.14,0.9926) (0.16,0.9927) (0.18,0.9929) (0.20,0.9934) (0.22,0.9951) (0.24,0.9963) (0.26,0.9964) (0.28,0.9964) (0.30,0.9965) (0.32,0.9967) (0.34,0.9972) (0.36,0.9983) (0.38,0.9984) (0.40,0.9990) (0.42,1.0000) (0.44,1.0000) (0.46,1.0000) (0.48,1.0000) (0.50,1.0000)};
\addlegendentry{oracle (errors first)}
\addplot[color=spred, line width=1.1pt] coordinates {(0.00,0.5746) (0.02,0.6944) (0.04,0.7529) (0.06,0.7888) (0.08,0.8131) (0.10,0.8310) (0.12,0.8452) (0.14,0.8566) (0.16,0.8655) (0.18,0.8727) (0.20,0.8788) (0.22,0.8844) (0.24,0.8896) (0.26,0.8944) (0.28,0.8991) (0.30,0.9032) (0.32,0.9069) (0.34,0.9099) (0.36,0.9125) (0.38,0.9149) (0.40,0.9174) (0.42,0.9199) (0.44,0.9221) (0.46,0.9239) (0.48,0.9260) (0.50,0.9286)};
\addlegendentry{decision margin {\fontsize{5}{6}\selectfont(AUSE $0.116$)}}
\addplot[color=spblue, line width=1.1pt] coordinates {(0.00,0.5746) (0.02,0.5832) (0.04,0.5884) (0.06,0.5925) (0.08,0.5956) (0.10,0.5983) (0.12,0.6007) (0.14,0.6029) (0.16,0.6050) (0.18,0.6071) (0.20,0.6091) (0.22,0.6108) (0.24,0.6125) (0.26,0.6141) (0.28,0.6157) (0.30,0.6174) (0.32,0.6190) (0.34,0.6205) (0.36,0.6220) (0.38,0.6236) (0.40,0.6252) (0.42,0.6267) (0.44,0.6282) (0.46,0.6297) (0.48,0.6313) (0.50,0.6330)};
\addlegendentry{photometric uncertainty {\fontsize{5}{6}\selectfont(AUSE $0.372$)}}
\addplot[color=black!60, dotted, line width=0.9pt] coordinates {(0.00,0.5746) (0.02,0.5745) (0.04,0.5745) (0.06,0.5745) (0.08,0.5745) (0.10,0.5745) (0.12,0.5744) (0.14,0.5744) (0.16,0.5744) (0.18,0.5745) (0.20,0.5745) (0.22,0.5745) (0.24,0.5744) (0.26,0.5744) (0.28,0.5744) (0.30,0.5744) (0.32,0.5744) (0.34,0.5745) (0.36,0.5744) (0.38,0.5745) (0.40,0.5744) (0.42,0.5745) (0.44,0.5745) (0.46,0.5744) (0.48,0.5744) (0.50,0.5743)};
\addlegendentry{random {\fontsize{5}{6}\selectfont(AUSE $0.409$)}}
\end{axis}
\end{tikzpicture}
\caption{Sparsification of the online masks of O\protect\nobreakdash-SCD (left) and O\protect\nobreakdash-SCD++ (right) on PASLCD~\cite{galappaththige2025multi}. In each frame, a growing fraction of pixels is withheld in the order given, and mIoU is computed on the pixels that remain. For O\protect\nobreakdash-SCD, pixels are ranked by their distance to its fixed cutoff of $0.5$; for O\protect\nobreakdash-SCD++, by the decision margin of \cref{eq:margin}. Smaller margins and larger uncertainties are withheld first. AUSE is the area between a curve and the oracle, normalized by the plotted range, so that it equals the mean gap to the oracle; lower is better.}
\label{fig:sparsification}
\vspace{6pt}
\centering
\definecolor{qualfn}{RGB}{232,53,43}
\definecolor{qualfp}{RGB}{200,110,215}
\newcommand{\marginkey}[2]{\fcolorbox{black!60}{#1}{\rule{0pt}{3pt}\rule{9pt}{0pt}}~#2}
\newcommand{\margintile}[1]{\includegraphics[width=0.238\textwidth]{figures/margin/#1}}
\newcommand{\marginrow}[5]{%
\rotatebox{90}{\parbox{0.134\textwidth}{\centering\scriptsize #1}} &
\margintile{#2_ref.jpg} & \margintile{#2_inf.jpg} &
\margintile{#2_pred.png} & \margintile{#2_margin.jpg} \\[#5]}
\setlength{\tabcolsep}{0.6pt}
\renewcommand{\arraystretch}{0}
\begin{tabular}{ccccc}
 & \footnotesize Pre-change (rendered) & \footnotesize Post-change & \footnotesize O-SCD++ prediction & \footnotesize Decision margin \\[3pt]
\marginrow{Porch}{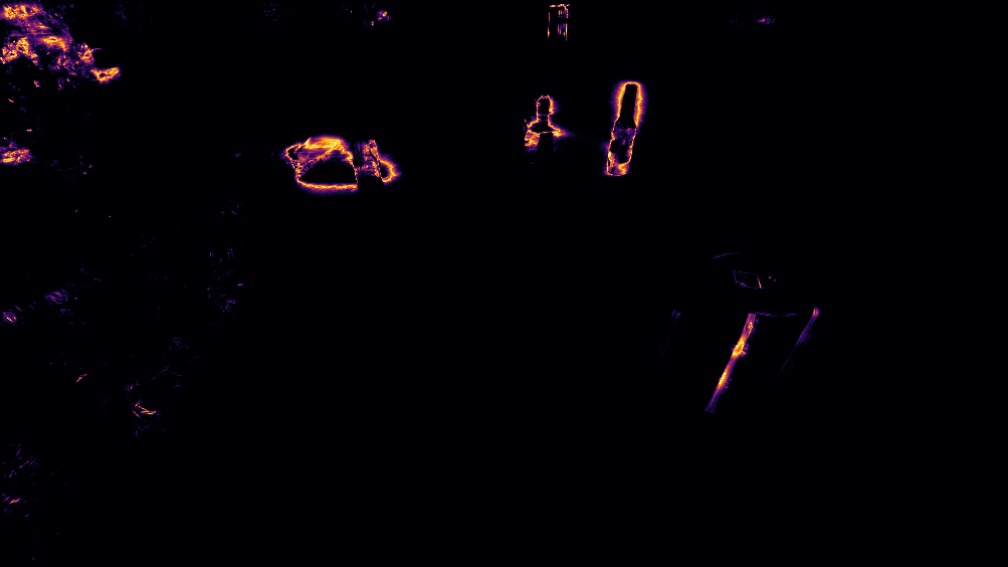}{0.624}{0.769}{1.2pt}
\marginrow{Pots}{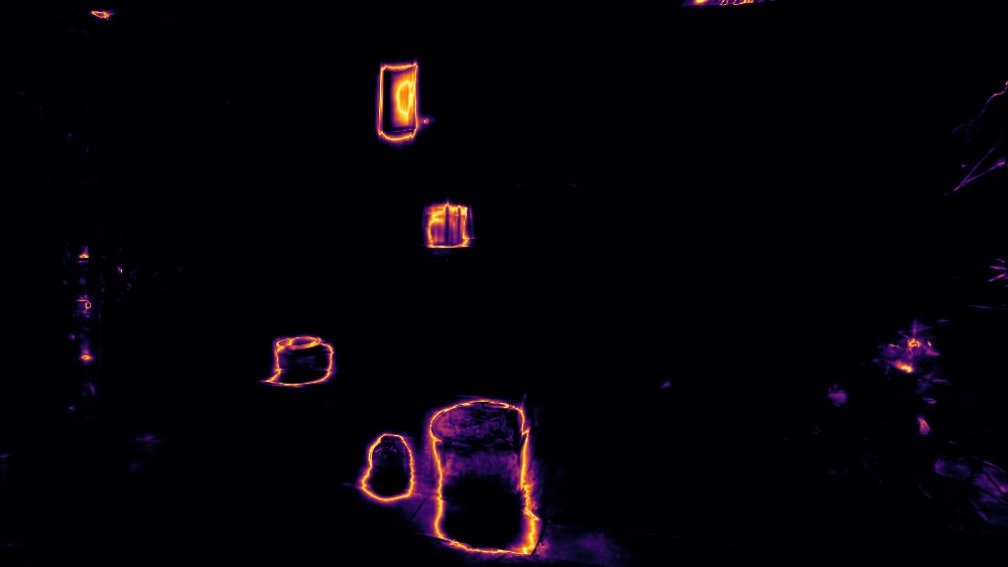}{0.778}{0.875}{1.2pt}
\marginrow{Cantina}{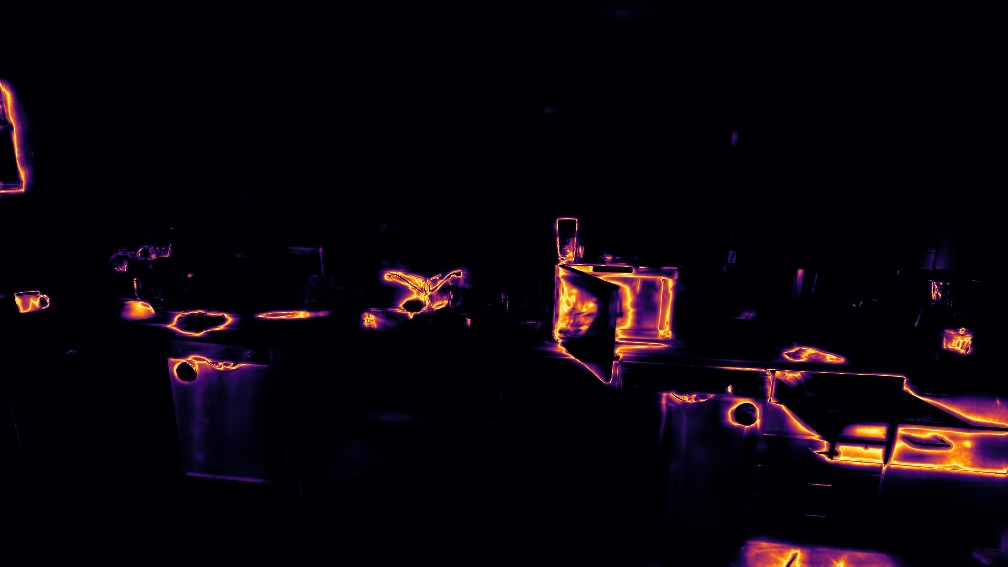}{0.610}{0.758}{1.2pt}
\marginrow{Meeting room}{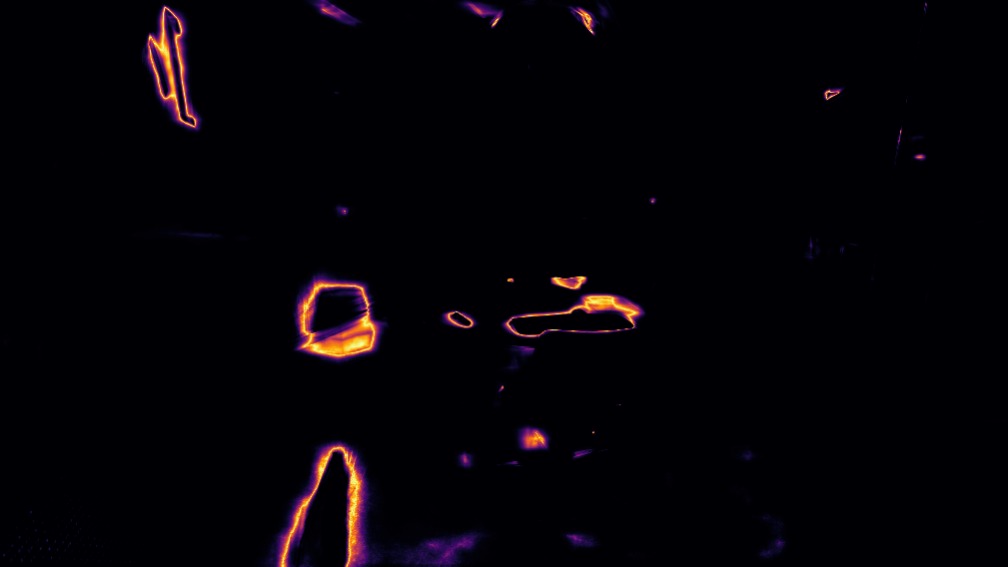}{0.687}{0.815}{0pt}
\end{tabular}

\vspace{4pt}
{\footnotesize
\marginkey{white}{True positive}\qquad
\marginkey{black}{True negative}\qquad
\marginkey{qualfn}{False negative}\qquad
\marginkey{qualfp}{False positive}}

\caption{Decision margins of O\protect\nobreakdash-SCD++ on PASLCD~\cite{galappaththige2025multi}. The margin map is bright where the margin is small, that is, where a slightly different threshold would change the label. These regions coincide with the misclassified pixels of the prediction, change boundaries, and missed regions.}
\label{fig:margin}
\end{figure*}

\begin{table}[t]
\centering
\footnotesize
\setlength{\tabcolsep}{2pt}
\caption{Sensitivity to the order of the inference frames on PASLCD~\cite{galappaththige2025multi}, over the incoming order and three random permutations. mIoU is the mean $\pm$ standard deviation over the four orderings; mask agreement is the mean per-frame IoU between the masks obtained under each permutation and under the incoming order.}
\label{tab:order}
\begin{tabular*}{0.9\columnwidth}{@{\extracolsep{\fill}}llcc}
\toprule
Method & Setting & mIoU & Mask agreement \\
\midrule
O\nobreakdash-SCD~\cite{galappaththige2025changes} & online & $0.4968 \pm 0.0057$ & 0.674 \\
O\nobreakdash-SCD~\cite{galappaththige2025changes} & retrospective & $0.5089 \pm 0.0019$ & 0.853 \\
\midrule
O\nobreakdash-SCD++ & online & $0.5767 \pm 0.0007$ & 0.883 \\
O\nobreakdash-SCD++ & retrospective & $\mathbf{0.5884 \pm 0.0000}$ & \textbf{0.999} \\
\bottomrule
\end{tabular*}
\end{table}

\noindent\textbf{Frame-order invariance.} \Cref{prop:exact} implies that the final state of O\nobreakdash-SCD++ depends on which frames were observed but not on the order in which they arrived. This guarantee holds for the state once every frame has been observed, and hence for retrospective inference. It does not extend to online masks, since the mask of each frame is inferred from only the frames that preceded it, and which frames those are depends on the order. We empirically validate this by processing each sequence in its incoming order and in three random permutations. Pose estimation is seeded per frame so that every frame receives the same pose in every ordering, which is why the values in \cref{tab:order} differ slightly from those in \cref{tab:paslcd,tab:retrospective}. \Cref{tab:order} reports the results. The retrospective masks of O\nobreakdash-SCD++ are identical across orderings up to floating-point noise, with the same mIoU to four decimal places and a mask agreement of $0.999$. Those of O\nobreakdash-SCD are not: although their mIoU varies only moderately, the masks themselves differ substantially, with a mask agreement of $0.853$. As expected, online masks depend on the order for both methods, but those of O\nobreakdash-SCD++ vary about eight times less than those of O\nobreakdash-SCD.

\noindent\textbf{Predictive photometric uncertainty guidance.} Rendering artifacts in the reference image can cause spurious cues~\cite{galappaththige2026predictive}. 3DGS-uncertainty~\cite{galappaththige2026predictive} fits a per-primitive uncertainty channel to a trained 3DGS map from its reconstruction residuals; rendered at the estimated pose of an inference image and clipped to $[0,1]$, it gives a per-pixel uncertainty $u_t(p)$, which attenuates the cue of \cref{eq:cue}:
\begin{equation}
\tilde{c}_t(p) = c_t(p)\,\bigl(1 - u_t(p)\bigr).
\label{eq:uncertainty}
\end{equation}
Both the uncertainty estimate and \cref{eq:uncertainty} are contributions of~\cite{galappaththige2026predictive}, which we apply unchanged: the channel is fitted once to each reference map before the revisit, and $\tilde{c}_t$ replaces $c_t$ in both methods. \Cref{tab:uncertainty} shows that the uncertainty improves both methods at no measurable cost in inference speed; on PASLCD, O\nobreakdash-SCD++ gains considerably more. \Cref{fig:uncertainty} shows qualitative examples.

\noindent\textbf{Runtime.} \Cref{tab:runtime} breaks the per-frame processing time down by module. In O\nobreakdash-SCD, change detection takes $64\%$ of every frame, most of it in multi-view fusion. The replay-free fusion of O\nobreakdash-SCD++ is about $20\times$ faster, so change detection falls to $23\%$ of the frame. Change detection is therefore no longer the bottleneck; further speed-ups could come from the pose estimation module.

\subsection{Selective Prediction from Decision Margins}
\label{sec:selective}

\Cref{fig:sparsification} withholds a growing fraction of pixels in each frame and reports the mIoU of the online masks on the pixels that remain. Withholding the pixels of smallest decision margin raises mIoU from $0.575$ to $0.831$ at $10\%$ and $0.879$ at $20\%$ withheld, whereas random withholding leaves it unchanged (AUSE $0.116$ against $0.409$). Withholding by photometric uncertainty~\cite{galappaththige2026predictive} is far less effective ($0.598$ at $10\%$): it indicates where the \emph{reference map} is unreliable rather than where the \emph{decision} is. An oracle withholding misclassified pixels first bounds the gain.

\noindent\textbf{What the margin measures.} The margin is a confidence in the decision. It measures how far the fused evidence at a pixel lies from the point at which the label-free objective would label it differently. A small margin marks a pixel whose label a slightly different weighing of the evidence would flip. \Cref{fig:margin} shows that small margins concentrate on the misclassified pixels, along object boundaries and within missed regions. These small boundary margins are largely a consequence of how the cues are generated: the semantic cue comes from a feature backbone whose resolution is bounded by its patch size~\cite{galappaththige2026pixels}, and the photometric cue is computed over local windows. The rendered map further spreads each primitive's evidence over its footprint. Pixels near object boundaries therefore receive intermediate scores and hence small margins.

\noindent\textbf{Margin and photometric uncertainty.} The two signals answer different questions. Photometric uncertainty~\cite{galappaththige2026predictive} is a property of the reference map: it predicts where a rendering is likely to deviate from the true scene, independently of any change, and is therefore most useful upstream to attenuate cues produced by rendering artifacts (\cref{tab:uncertainty}). The margin is a property of the decision: it is computed after fusion and thresholding and reflects how decisively the accumulated evidence separates change from no change. \Cref{fig:sparsification} shows the difference, as ranking by uncertainty barely improves the retained region whereas ranking by margin does. The two are complementary: uncertainty cleans the evidence, and the margin reports how far each resulting decision can be trusted.

\noindent\textbf{Why earlier methods lack a margin.} A margin is informative only if its threshold is itself meaningful. Label-free SCD methods such as MV3DCD~\cite{galappaththige2025multi} and O\nobreakdash-SCD~\cite{galappaththige2025changes} binarize at a fixed cutoff of $0.5$, which is not the optimum of any objective. In O\nobreakdash-SCD, the sparsity term drives unchanged surfaces to the non-negativity bound of the rendered field, while the diminishing slope of the sigmoid lets detected change settle only modestly above the cutoff. Correct detections and errors therefore lie at similar distances from the cutoff, and ranking by this distance withholds correct detections more readily than errors: withholding $10\%$ of pixels lowers the mIoU of O\nobreakdash-SCD from $0.489$ to $0.303$ (\cref{fig:sparsification}). In O\nobreakdash-SCD++, the threshold is the exact minimizer of the decision objective over threshold masks, so the distance to it separates decisive from marginal decisions at no cost. To our knowledge, O\nobreakdash-SCD++ is the first label-free online SCD method to provide a per-pixel confidence for its decisions. \Cref{sec:limitations} discusses how this confidence can be used.

\section{Limitations and Future Work}
\label{sec:limitations}

\noindent\textbf{Change cues.} O\nobreakdash-SCD++ fuses the evidence its cues provide and cannot recover changes that the cues miss. This limitation stems from the vision foundation models used for cue extraction~\cite{ravi2024sam} and is shared with MV3DCD~\cite{galappaththige2025multi} and O\nobreakdash-SCD~\cite{galappaththige2025changes}: semantic cues are largely insensitive to surface-level changes on semantically similar objects, such as a change of colour. Complementing them with photometric cues~\cite{galappaththige2025multi} and fusing both in continuous form~\cite{galappaththige2025changes} mitigates this, as for the T-shaped object that turns from red to blue in the Meeting room of \cref{fig:uncertainty}, but some appearance changes are still missed, such as the bowl in Cantina that turns from pink to blue in \cref{fig:margin}. The resolution of the cues also limits the precision of the masks at object boundaries (\cref{sec:selective}). Cue extraction is modular, so O\nobreakdash-SCD++ benefits directly from advances in vision foundation models. Designing better cues, and combining cues from several modalities, such as semantic, photometric and geometric evidence, is a promising direction, since the fusion accepts any per-pixel cue in $[0,1]$. Comparing the scenes in the primitives rather than in the image~\cite{galappaththige2026pixels} offers an alternative to image-space cues.

\noindent\textbf{Reference map.} As in other 3DGS-based change detection methods~\cite{galappaththige2025multi,lu20253dgs,zhou2026d,jiang2025gaussian,galappaththige2026pixels,lai2026plenoci,galappaththige2025changes}, change can be inferred only relative to the reference surface, so changes in regions that the reference map does not cover cannot be detected. O\nobreakdash-SCD++ therefore assumes that the reference traversal observes the scene from enough viewpoints, with enough overlap, to reconstruct an accurate 3DGS map (\cref{sec:problem}). This matches autonomous inspection, where an asset is surveyed repeatedly and a reference map or digital twin already exists, but it makes O\nobreakdash-SCD++ hard to apply to benchmarks such as SceneDiff~\cite{wu2025scenediff}, whose short video clips with narrow camera baselines do not support a reliable 3DGS reconstruction. As in other SCD methods, the scene is also assumed to be static during each capture. Extending O\nobreakdash-SCD++ to captures with moving objects, and keeping the reference map or digital twin current between revisits, are natural directions. Towards the latter, the masks of O\nobreakdash-SCD++ can drive the representation update of O\nobreakdash-SCD~\cite{galappaththige2025changes} unchanged (Appendix~\ref{app:update}).

\noindent\textbf{Front end and scale.} The pose front end is adopted from O\nobreakdash-SCD. XFeat~\cite{potje2024xfeat} is a lightweight descriptor designed for efficiency on real images, which suits online operation in real environments, but registration with it can struggle under large in-plane rotations, requires inputs of at least VGA resolution~\cite{galappaththige2025changes}, and in our experience does not yield robust matches on synthetic images such as those of ChangeSim~\cite{park2021changesim}. The front end is modular, as Appendix~\ref{app:changesim} shows with ALIKED~\cite{zhao2023aliked} and LightGlue~\cite{lightglue_2023} on ChangeSim. Advances in lightweight feature matching would therefore carry over directly and improve throughput, since pose estimation now dominates the runtime (\cref{tab:runtime}). Appendix~\ref{app:keyframes} shows that restricting retrieval to keyframes selected adaptively from the reference frames already raises throughput by $9\%$ without loss of accuracy. The cost of rendering and adjoint transfer grows with the number of primitives, which scalable representations~\cite{Kerbl2024AH3,fastgs,HansonSpeedy} could bound in large scenes.

\noindent\textbf{Decision margin.} The margin ranks decisions by confidence but is not a calibrated probability. Even as a ranking, it can inform downstream policies. A system can report only the changes it is confident about, pass the least confident regions to an operator for review, or, while still on site, plan a path that re-observes them from new viewpoints, which turns the margin into a criterion for active view selection or active mapping in changing environments. Calibrating the margin and feeding it back into the fusion, for example to weight observations, are natural extensions.

\section{Conclusion}
\label{sec:conclusion}

This work shows that the trade-off between accuracy and speed in online scene change detection can be removed. O\nobreakdash-SCD++ does so by replacing the iterative, replay-based fusion of O\nobreakdash-SCD with a direct estimator built on the linearity of splatting, in which transfer, aggregation and decision each have a closed form and no stage requires iterative optimization. We establish these properties formally, proving that the recursion is exact and that the scan attains the global minimum over threshold masks, and support them with an extended evaluation that analyses each component, its behaviour across scene types, lighting and frame orderings, its gains from predictive photometric uncertainty guidance, and the decision margins. O\nobreakdash-SCD++ sets a new state of the art for online SCD on PASLCD, CL-Splats and ChangeSim. On PASLCD it raises mIoU from $0.489$ to $0.575$ over O\nobreakdash-SCD at $2.2\times$ its throughput, and surpasses the offline result of O\nobreakdash-SCD while remaining causal. Exactness brings further properties. The state reached after a sequence does not depend on the order in which frames arrive and, re-applied to every frame, improves on the online masks without refinement, and the exact decision provides a per-pixel margin that is, to our knowledge, the first confidence offered by a label-free online SCD method. We believe this brings online SCD a step closer to real-world, real-time deployment, where an agent must report not only what has changed but how certain each decision is, while it is still on site.

\section*{Acknowledgments}

This work was supported by the Australian Research Council Research Hub in Intelligent Robotic Systems for Real-Time Asset Management (ARIAM) (IH210100030) and Abyss Solutions. C.J.G., N.S. and D.M. also acknowledge ongoing support from the QUT Centre for Robotics.

\bibliographystyle{IEEEtran}
\bibliography{references}

\begin{thebibliography}{10}
\providecommand{\url}[1]{#1}
\csname url@samestyle\endcsname
\providecommand{\newblock}{\relax}
\providecommand{\bibinfo}[2]{#2}
\providecommand{\BIBentrySTDinterwordspacing}{\spaceskip=0pt\relax}
\providecommand{\BIBentryALTinterwordstretchfactor}{4}
\providecommand{\BIBentryALTinterwordspacing}{\spaceskip=\fontdimen2\font plus
\BIBentryALTinterwordstretchfactor\fontdimen3\font minus
  \fontdimen4\font\relax}
\providecommand{\BIBforeignlanguage}[2]{{%
\expandafter\ifx\csname l@#1\endcsname\relax
\typeout{** WARNING: IEEEtran.bst: No hyphenation pattern has been}%
\typeout{** loaded for the language `#1'. Using the pattern for}%
\typeout{** the default language instead.}%
\else
\language=\csname l@#1\endcsname
\fi
#2}}
\providecommand{\BIBdecl}{\relax}
\BIBdecl

\bibitem{sakurada_change_2015}
K.~Sakurada and T.~Okatani, ``Change detection from a street image pair using
  {CNN} features and superpixel segmentation,'' in \emph{Proceedings of the
  British Machine Vision Conference}, 2015, pp. 61.1--61.12.

\bibitem{alcantarilla2018street}
P.~F. Alcantarilla, S.~Stent, G.~Ros, R.~Arroyo, and R.~Gherardi, ``Street-view
  change detection with deconvolutional networks,'' \emph{Autonomous Robots},
  vol.~42, no.~7, pp. 1301--1322, 2018.

\bibitem{daudt2018fully}
R.~C. Daudt, B.~Le~Saux, and A.~Boulch, ``Fully convolutional siamese networks
  for change detection,'' in \emph{Proceedings of the IEEE International
  Conference on Image Processing}, 2018, pp. 4063--4067.

\bibitem{cyws2d}
R.~Sachdeva and A.~Zisserman, ``The change you want to see,'' in
  \emph{Proceedings of the IEEE/CVF Winter Conference on Applications of
  Computer Vision}, 2023, pp. 3993--4002.

\bibitem{galappaththige2025multi}
C.~J. Galappaththige, J.~Lai, L.~Windrim, D.~G. Dansereau, N.~Suenderhauf, and
  D.~Miller, ``Multi-view pose-agnostic change localization with zero labels,''
  in \emph{Proceedings of the IEEE/CVF Conference on Computer Vision and
  Pattern Recognition}, 2025.

\bibitem{lu20253dgs}
Z.~Lu, J.~Ye, and J.~Leonard, ``{3DGS-CD}: {3D} {Gaussian} splatting-based
  change detection for physical object rearrangement,'' \emph{IEEE Robotics and
  Automation Letters}, 2025.

\bibitem{galappaththige2026pixels}
C.~J. Galappaththige, J.~Lai, T.~Patten, D.~Dansereau, N.~Suenderhauf, and
  D.~Miller, ``From pixels to primitives: Scene change detection in {3D}
  {Gaussian} splatting,'' \emph{arXiv preprint arXiv:2605.07203}, 2026.

\bibitem{jiang2025gaussian}
B.~Jiang, R.~Huang, Q.~Zhao, and Y.~Zhang, ``Gaussian difference: Find any
  change instance in {3D} scenes,'' in \emph{Proceedings of the IEEE
  International Conference on Acoustics, Speech and Signal Processing}, 2025,
  pp. 1--5.

\bibitem{zhou2026d}
Z.~Zhou, J.~Ni, S.~Zhang, Y.~Chen, and S.~Huang, ``{3D} scene change modeling
  with consistent multi-view aggregation,'' in \emph{Proceedings of the
  International Conference on 3D Vision}, 2026.

\bibitem{galappaththige2025changes}
C.~J. Galappaththige, J.~Lai, L.~Windrim, D.~G. Dansereau, N.~Suenderhauf, and
  D.~Miller, ``Changes in real time: Online scene change detection with
  multi-view fusion,'' in \emph{Proceedings of the IEEE/CVF Conference on
  Computer Vision and Pattern Recognition}, 2026.

\bibitem{park2021changesim}
J.-M. Park, J.-H. Jang, S.-M. Yoo, S.-K. Lee, U.-H. Kim, and J.-H. Kim,
  ``{ChangeSim}: Towards end-to-end online scene change detection in industrial
  indoor environments,'' in \emph{Proceedings of the IEEE/RSJ International
  Conference on Intelligent Robots and Systems}, 2021, pp. 8578--8585.

\bibitem{kerbl20233d}
B.~Kerbl, G.~Kopanas, T.~Leimk{\"u}hler, and G.~Drettakis, ``{3D} {Gaussian}
  splatting for real-time radiance field rendering.'' \emph{ACM Transactions on
  Graphics}, vol.~42, no.~4, pp. 139--1, 2023.

\bibitem{galappaththige2026predictive}
C.~J. Galappaththige, T.~Gottwald, P.~Stehr, E.~Heinert, N.~Suenderhauf,
  D.~Miller, and M.~Rottmann, ``Predictive photometric uncertainty in
  {Gaussian} splatting for novel view synthesis,'' in \emph{Proceedings of the
  European Conference on Computer Vision}.\hskip 1em plus 0.5em minus
  0.4em\relax Springer, 2026, pp. 482--500.

\bibitem{marrie2025ludvig}
J.~Marrie, R.~Menegaux, M.~Arbel, D.~Larlus, and J.~Mairal, ``{LUDVIG}:
  Learning-free uplifting of {2D} visual features to {Gaussian} splatting
  scenes,'' in \emph{Proceedings of the IEEE/CVF International Conference on
  Computer Vision (ICCV)}, 2025, pp. 7440--7450.

\bibitem{joseph2025gradient}
J.~Joseph, B.~Amrutur, and S.~Bhatnagar, ``Gradient-weighted feature
  back-projection: A fast alternative to feature distillation in {3D}
  {Gaussian} splatting,'' in \emph{Proceedings of SIGGRAPH Asia}, 2025.

\bibitem{varghese2018changenet}
A.~Varghese, J.~Gubbi, A.~Ramaswamy, and P.~Balamuralidhar, ``{ChangeNet}: A
  deep learning architecture for visual change detection,'' in
  \emph{Proceedings of the European Conference on Computer Vision Workshops},
  2018.

\bibitem{sakurada2020weakly}
K.~Sakurada, M.~Shibuya, and W.~Wang, ``Weakly supervised silhouette-based
  semantic scene change detection,'' in \emph{Proceedings of the IEEE
  International Conference on Robotics and Automation}, 2020, pp. 6861--6867.

\bibitem{lin2025robust}
C.-J. Lin, S.~Garg, T.-J. Chin, and F.~Dayoub, ``Robust scene change detection
  using visual foundation models and cross-attention mechanisms,'' in
  \emph{Proceedings of the IEEE International Conference on Robotics and
  Automation}, 2025, pp. 8337--8343.

\bibitem{cyws3d}
R.~Sachdeva and A.~Zisserman, ``The change you want to see (now in {3D}),'' in
  \emph{Proceedings of the IEEE/CVF International Conference on Computer Vision
  Workshops}, 2023, pp. 2060--2069.

\bibitem{lee2024semi}
S.~Lee and J.-H. Kim, ``Semi-supervised scene change detection by distillation
  from feature-metric alignment,'' in \emph{Proceedings of the IEEE/CVF Winter
  Conference on Applications of Computer Vision}, 2024, pp. 1215--1224.

\bibitem{gulrajani2021search}
I.~Gulrajani and D.~Lopez-Paz, ``In search of lost domain generalization,'' in
  \emph{International Conference on Learning Representations}, 2021.

\bibitem{galappaththige2025domain}
C.~J. Galappaththige, Z.~Izzo, X.~He, H.~Zhou, and M.~H. Khan, ``Domain-guided
  weight modulation for semi-supervised domain generalization,'' in
  \emph{Proceedings of the IEEE/CVF Winter Conference on Applications of
  Computer Vision}, 2025, pp. 6495--6505.

\bibitem{kim2025towards}
J.-W. Kim and U.-H. Kim, ``Towards generalizable scene change detection,'' in
  \emph{Proceedings of the IEEE/CVF Conference on Computer Vision and Pattern
  Recognition}, 2025, pp. 24\,463--24\,473.

\bibitem{kannanZero}
S.~S. Kannan and B.-C. Min, ``{ZeroSCD}: Zero-shot street scene change
  detection,'' in \emph{Proceedings of the IEEE International Conference on
  Robotics and Automation}, 2025, pp. 4665--4671.

\bibitem{cho2025zero}
K.~Cho, D.~Y. Kim, and E.~Kim, ``Zero-shot scene change detection,'' in
  \emph{Proceedings of the AAAI Conference on Artificial Intelligence}, 2025,
  pp. 2509--2517.

\bibitem{alpherts2025emplace}
T.~Alpherts, S.~Ghebreab, and N.~van Noord, ``{EMPLACE}: Self-supervised urban
  scene change detection,'' in \emph{Proceedings of the AAAI Conference on
  Artificial Intelligence}, 2025, pp. 1737--1745.

\bibitem{kirillov2023segment}
A.~Kirillov, E.~Mintun, N.~Ravi, H.~Mao, C.~Rolland, L.~Gustafson, T.~Xiao,
  S.~Whitehead, A.~C. Berg, W.-Y. Lo \emph{et~al.}, ``Segment anything,'' in
  \emph{Proceedings of the IEEE/CVF International Conference on Computer
  Vision}, 2023, pp. 4015--4026.

\bibitem{oquab2023dinov2}
M.~Oquab, T.~Darcet, T.~Moutakanni, H.~Vo, M.~Szafraniec, V.~Khalidov,
  P.~Fernandez, D.~Haziza, F.~Massa, A.~El-Nouby \emph{et~al.}, ``{DINOv2}:
  Learning robust visual features without supervision,'' \emph{Transactions on
  Machine Learning Research}, 2024.

\bibitem{ravi2024sam}
N.~Ravi, V.~Gabeur, Y.-T. Hu, R.~Hu, C.~Ryali, T.~Ma, H.~Khedr, R.~R{\"a}dle,
  C.~Rolland, L.~Gustafson \emph{et~al.}, ``{SAM} 2: Segment anything in images
  and videos,'' in \emph{Proceedings of the International Conference on
  Learning Representations}, 2025.

\bibitem{mildenhall2021nerf}
B.~Mildenhall, P.~P. Srinivasan, M.~Tancik, J.~T. Barron, R.~Ramamoorthi, and
  R.~Ng, ``{NeRF}: Representing scenes as neural radiance fields for view
  synthesis,'' \emph{Communications of the ACM}, vol.~65, no.~1, pp. 99--106,
  2021.

\bibitem{zhou2023pad}
Q.~Zhou, W.~Li, L.~Jiang, G.~Wang, G.~Zhou, S.~Zhang, and H.~Zhao, ``{PAD}: A
  dataset and benchmark for pose-agnostic anomaly detection,'' in
  \emph{Advances in Neural Information Processing Systems}, 2023, pp.
  44\,558--44\,571.

\bibitem{kruse2024splatpose}
M.~Kruse, M.~Rudolph, D.~Woiwode, and B.~Rosenhahn, ``{SplatPose \& Detect}:
  Pose-agnostic {3D} anomaly detection,'' in \emph{Proceedings of the IEEE/CVF
  Conference on Computer Vision and Pattern Recognition Workshops}, 2024, pp.
  3950--3960.

\bibitem{wu20244d}
G.~Wu, T.~Yi, J.~Fang, L.~Xie, X.~Zhang, W.~Wei, W.~Liu, Q.~Tian, and X.~Wang,
  ``{4D} {Gaussian} splatting for real-time dynamic scene rendering,'' in
  \emph{Proceedings of the IEEE/CVF Conference on Computer Vision and Pattern
  Recognition}, 2024, pp. 20\,310--20\,320.

\bibitem{huang2025c}
R.~Huang, H.~Tao, B.~Jiang, Q.~Zhao, L.~Wan, and Q.~Guo, ``{C-NeRF}:
  Representing scene changes as directional consistency difference-based
  {NeRF},'' \emph{IEEE Transactions on Image Processing}, vol.~34, pp.
  7502--7513, 2025.

\bibitem{lai2026plenoci}
J.~Lai, C.~J. Galappaththige, N.~S{\"u}nderhauf, D.~Miller, and D.~G.
  Dansereau, ``{PlenoCI}: Plenoptic characteristics for view dependence aware
  change classification,'' \emph{arXiv preprint arXiv:2609.28930}, 2026.

\bibitem{friedlander2026goldilocs}
A.~Friedlander, A.~Shamir, and O.~Fried, ``{GOLDILOCS}: General object-level
  detection and labeling of changes in scenes,'' in \emph{Proceedings of the
  International Conference on Learning Representations}, 2026.

\bibitem{wu2025scenediff}
Y.~Wu, C.-h. Lin, H.~Che, A.~Tiwari, C.~Zou, S.~Wang, and D.~Hoiem,
  ``{SceneDiff}: A benchmark and method for multiview object change
  detection,'' in \emph{Proceedings of the European Conference on Computer
  Vision}, 2026.

\bibitem{wang2026pi}
Y.~Wang, J.~Zhou, H.~Zhu, W.~Chang, Y.~Zhou, Z.~Li, J.~Chen, J.~Pang, C.~Shen,
  and T.~He, ``Permutation-equivariant visual geometry learning,'' in
  \emph{Proceedings of the International Conference on Learning
  Representations}, 2026, pp. 10\,481--10\,497.

\bibitem{wang_dust3r_2024}
S.~Wang, V.~Leroy, Y.~Cabon, B.~Chidlovskii, and J.~Revaud, ``{DUSt3R}:
  Geometric {3D} vision made easy,'' in \emph{Proceedings of the IEEE/CVF
  Conference on Computer Vision and Pattern Recognition}, 2024, pp.
  20\,697--20\,709.

\bibitem{yoon2026vscd}
J.~Yoon and U.-H. Kim, ``{VSCD}: Video-based scene change detection in
  unaligned scenes,'' in \emph{Proceedings of the International Conference on
  Machine Learning}, 2026.

\bibitem{schonberger2016structure}
J.~L. Schonberger and J.-M. Frahm, ``Structure-from-motion revisited,'' in
  \emph{Proceedings of the IEEE Conference on Computer Vision and Pattern
  Recognition}, 2016, pp. 4104--4113.

\bibitem{labbe2019rtab}
M.~Labb{\'e} and F.~Michaud, ``{RTAB-Map} as an open-source lidar and visual
  simultaneous localization and mapping library for large-scale and long-term
  online operation,'' \emph{Journal of Field Robotics}, vol.~36, no.~2, pp.
  416--446, 2019.

\bibitem{yeh2026towards}
Y.-H. Yeh, C.-Y. Huang, K.-W. Chen, and L.-Y. Lu, ``Towards practical scene
  change detection: A fast, unaligned video framework via spatiotemporal
  alignment,'' in \emph{Proceedings of the IEEE/RSJ International Conference on
  Intelligent Robots and Systems}, 2026.

\bibitem{izquierdo2024optimal}
S.~Izquierdo and J.~Civera, ``Optimal transport aggregation for visual place
  recognition,'' in \emph{Proceedings of the IEEE/CVF Conference on Computer
  Vision and Pattern Recognition}, 2024, pp. 17\,658--17\,668.

\bibitem{liu2024splatpose+}
Y.~Liu, Y.~S. Hu, Y.~Chen, and J.~Zelek, ``{SplatPose+}: Real-time image-based
  pose-agnostic {3D} anomaly detection,'' in \emph{Proceedings of the European
  Conference on Computer Vision Workshops}, 2024, pp. 378--391.

\bibitem{schmid2022panoptic}
L.~Schmid, J.~Delmerico, J.~L. Sch{\"o}nberger, J.~Nieto, M.~Pollefeys,
  R.~Siegwart, and C.~Cadena, ``Panoptic multi-{TSDFs}: A flexible
  representation for online multi-resolution volumetric mapping and long-term
  dynamic scene consistency,'' in \emph{Proceedings of the IEEE International
  Conference on Robotics and Automation}, 2022, pp. 8018--8024.

\bibitem{fu2022planesdf}
J.~Fu, C.~Lin, Y.~Taguchi, A.~Cohen, Y.~Zhang, S.~Mylabathula, and J.~J.
  Leonard, ``{PlaneSDF}-based change detection for long-term dense mapping,''
  \emph{IEEE Robotics and Automation Letters}, vol.~7, no.~4, pp. 9667--9674,
  2022.

\bibitem{qian2023pov}
J.~Qian, V.~Chatrath, J.~Servos, A.~Mavrinac, W.~Burgard, S.~L. Waslander, and
  A.~P. Schoellig, ``{POV-SLAM}: Probabilistic object-aware variational {SLAM}
  in semi-static environments,'' in \emph{Proceedings of Robotics: Science and
  Systems}, 2023.

\bibitem{zhu2024living}
L.~Zhu, S.~Huang, K.~Schindler, and I.~Armeni, ``Living scenes: Multi-object
  relocalization and reconstruction in changing {3D} environments,'' in
  \emph{Proceedings of the IEEE/CVF Conference on Computer Vision and Pattern
  Recognition}, 2024, pp. 28\,014--28\,024.

\bibitem{yugay2025gaussian}
V.~Yugay, T.~Kersten, L.~Carlone, T.~Gevers, M.~R. Oswald, and L.~Schmid,
  ``{Gaussian} mapping for evolving scenes,'' in \emph{Proceedings of the
  IEEE/CVF Conference on Computer Vision and Pattern Recognition}, 2026, pp.
  18\,903--18\,912.

\bibitem{zeng2025gaussianupdate}
L.~Zeng, B.~Zhao, J.~Hu, X.~Shen, Z.~Dang, H.~Bao, and Z.~Cui,
  ``{GaussianUpdate}: Continual {3D} {Gaussian} splatting update for changing
  environments,'' in \emph{Proceedings of the IEEE/CVF International Conference
  on Computer Vision}, 2025.

\bibitem{ackermann2025clsplats}
J.~Ackermann, J.~Kulhanek, S.~Cai, X.~Haofei, M.~Pollefeys, G.~Wetzstein,
  L.~Guibas, and S.~Peng, ``{CL-Splats}: Continual learning of {Gaussian}
  splatting with local optimization,'' in \emph{Proceedings of the IEEE/CVF
  International Conference on Computer Vision}, 2025.

\bibitem{schmid2024khronos}
L.~Schmid, M.~Abate, Y.~Chang, and L.~Carlone, ``Khronos: A unified approach
  for spatio-temporal metric-semantic {SLAM} in dynamic environments,'' in
  \emph{Proceedings of Robotics: Science and Systems}, 2024.

\bibitem{jiang2024fisherrf}
W.~Jiang, B.~Lei, and K.~Daniilidis, ``{FisherRF}: Active view selection and
  mapping with radiance fields using {Fisher} information,'' in
  \emph{Proceedings of the European Conference on Computer Vision}, 2024, pp.
  422--440.

\bibitem{potje2024xfeat}
G.~Potje, F.~Cadar, A.~Araujo, R.~Martins, and E.~R. Nascimento, ``{XFeat}:
  Accelerated features for lightweight image matching,'' in \emph{Proceedings
  of the IEEE/CVF Conference on Computer Vision and Pattern Recognition}, 2024,
  pp. 2682--2691.

\bibitem{lepetit2009epnp}
V.~Lepetit, F.~Moreno-Noguer, and P.~Fua, ``{EPnP}: An accurate {O(n)} solution
  to the {PnP} problem,'' \emph{International Journal of Computer Vision},
  vol.~81, no.~2, pp. 155--166, 2009.

\bibitem{fischler1981ransac}
M.~A. Fischler and R.~C. Bolles, ``Random sample consensus: A paradigm for
  model fitting with applications to image analysis and automated
  cartography,'' \emph{Communications of the ACM}, vol.~24, no.~6, pp.
  381--395, 1981.

\bibitem{levenberg1944method}
K.~Levenberg, ``A method for the solution of certain non-linear problems in
  least squares,'' \emph{Quarterly of Applied Mathematics}, vol.~2, no.~2, pp.
  164--168, 1944.

\bibitem{meuleman2025fly}
A.~Meuleman, I.~Shah, A.~Lanvin, B.~Kerbl, and G.~Drettakis, ``On-the-fly
  reconstruction for large-scale novel view synthesis from unposed images,''
  \emph{ACM Transactions on Graphics}, vol.~44, no.~4, pp. 1--14, 2025.

\bibitem{wang_image_2004}
Z.~Wang, A.~Bovik, H.~Sheikh, and E.~Simoncelli, ``Image quality assessment:
  From error visibility to structural similarity,'' \emph{IEEE Transactions on
  Image Processing}, vol.~13, no.~4, pp. 600--612, 2004.

\bibitem{zwicker2002ewa}
M.~Zwicker, H.~Pfister, J.~Van~Baar, and M.~Gross, ``{EWA} splatting,''
  \emph{IEEE Transactions on Visualization and Computer Graphics}, vol.~8,
  no.~03, pp. 223--238, 2002.

\bibitem{elyaniv2010foundations}
R.~El-Yaniv and Y.~Wiener, ``On the foundations of noise-free selective
  classification,'' \emph{Journal of Machine Learning Research}, vol.~11, pp.
  1605--1641, 2010.

\bibitem{geifman2017selective}
Y.~Geifman and R.~El-Yaniv, ``Selective classification for deep neural
  networks,'' in \emph{Advances in Neural Information Processing Systems},
  2017.

\bibitem{Barron2021MipNeRF3U}
J.~T. Barron, B.~Mildenhall, D.~Verbin, P.~P. Srinivasan, and P.~Hedman,
  ``{Mip-NeRF} 360: Unbounded anti-aliased neural radiance fields,'' in
  \emph{Proceedings of the IEEE/CVF Conference on Computer Vision and Pattern
  Recognition}, 2022, pp. 5460--5469.

\bibitem{lin2014microsoft}
T.-Y. Lin, M.~Maire, S.~Belongie, J.~Hays, P.~Perona, D.~Ramanan,
  P.~Doll{\'a}r, and C.~L. Zitnick, ``Microsoft {COCO}: Common objects in
  context,'' in \emph{Proceedings of the European Conference on Computer
  Vision}, 2014, pp. 740--755.

\bibitem{otsu1979threshold}
N.~Otsu, ``A threshold selection method from gray-level histograms,''
  \emph{IEEE Transactions on Systems, Man, and Cybernetics}, vol.~9, no.~1, pp.
  62--66, 1979.

\bibitem{zhao2023aliked}
X.~Zhao, X.~Wu, W.~Chen, P.~C.~Y. Chen, Q.~Xu, and Z.~Li, ``{ALIKED}: A lighter
  keypoint and descriptor extraction network via deformable transformation,''
  \emph{IEEE Transactions on Instrumentation and Measurement}, vol.~72, pp.
  1--16, 2023.

\bibitem{lightglue_2023}
P.~Lindenberger, P.-E. Sarlin, and M.~Pollefeys, ``{LightGlue}: Local feature
  matching at light speed,'' in \emph{Proceedings of the IEEE/CVF International
  Conference on Computer Vision}, 2023, pp. 17\,581--17\,592.

\bibitem{Kerbl2024AH3}
B.~Kerbl, A.~Meuleman, G.~Kopanas, M.~Wimmer, A.~Lanvin, and G.~Drettakis, ``A
  hierarchical {3D} {Gaussian} representation for real-time rendering of very
  large datasets,'' \emph{ACM Transactions on Graphics}, vol.~43, pp. 1 -- 15,
  2024.

\bibitem{fastgs}
S.~Ren, T.~Wen, Y.~Fang, and B.~Lu, ``{FastGS}: Training {3D} {Gaussian}
  splatting in 100 seconds,'' in \emph{Proceedings of the IEEE/CVF Conference
  on Computer Vision and Pattern Recognition}, 2026.

\bibitem{HansonSpeedy}
A.~Hanson, A.~Tu, G.~Lin, V.~Singla, M.~Zwicker, and T.~Goldstein,
  ``{Speedy-Splat}: Fast {3D} {Gaussian} splatting with sparse pixels and
  sparse primitives,'' in \emph{Proceedings of the IEEE/CVF Conference on
  Computer Vision and Pattern Recognition}, 2025, pp. 21\,537--21\,546.

\bibitem{hollein20243dgslm}
L.~H{\"o}llein, A.~Bo{\v{z}}i{\v{c}}, M.~Zollh{\"o}fer, and M.~Nie{\ss}ner,
  ``{3DGS-LM}: Faster {Gaussian}-splatting optimization with
  {Levenberg-Marquardt},'' in \emph{Proceedings of the IEEE/CVF International
  Conference on Computer Vision}, 2025.

\bibitem{cai2023clnerf}
Z.~Cai and M.~M{\"u}ller, ``{CLNeRF}: Continual learning meets {NeRF},'' in
  \emph{Proceedings of the IEEE/CVF International Conference on Computer
  Vision}, 2023, pp. 23\,185--23\,194.

\end{thebibliography}

\appendices
\setcounter{table}{0}\renewcommand{\thetable}{S-\Roman{table}}
\setcounter{figure}{0}\renewcommand{\thefigure}{S\arabic{figure}}

\section{Offline Comparison on PASLCD}
\label{app:offline}

\Cref{tab:offline} compares O\nobreakdash-SCD++ with offline SCD methods on PASLCD~\cite{galappaththige2025multi}. Offline methods receive the complete post-change capture before inferring any mask, and pairwise methods receive aligned image pairs rendered from the reference reconstruction. O\nobreakdash-SCD++ is evaluated online, emitting each mask from the frames observed up to it, and retrospectively, re-inferring every frame from its final state (\cref{sec:analysis}). Because the final state is the exact aggregate over all frames (\cref{prop:exact}), retrospective inference is the offline setting of O\nobreakdash-SCD++, directly comparable to the offline methods. Online O\nobreakdash-SCD++ exceeds every offline method, including the offline result of O\nobreakdash-SCD, and retrospective inference extends its lead.

\begin{table}[!h]
\centering
\footnotesize
\setlength{\tabcolsep}{3pt}
\caption{Offline SCD methods and O\protect\nobreakdash-SCD++ on PASLCD~\cite{galappaththige2025multi}, averaged over all 20 instances. Offline methods use the complete post-change capture. O\protect\nobreakdash-SCD++ is causal in its online setting. LF: label-free.}
\label{tab:offline}
\begin{tabular*}{\columnwidth}{@{\extracolsep{\fill}}lcccc}
\toprule
Method & LF & Setting & mIoU & F1 \\
\midrule
CYWS2D~\cite{cyws2d} & -- & offline & 0.273 & 0.398 \\
VSCD~\cite{yoon2026vscd} & -- & offline & 0.112 & 0.186 \\
GeSCF~\cite{kim2025towards} & \checkmark & offline & 0.477 & 0.611 \\
SplatPose+~\cite{liu2024splatpose+} & \checkmark & offline & 0.237 & 0.358 \\
SCAR-3D~\cite{zhou2026d} & \checkmark & offline & 0.191 & 0.289 \\
3DGS-CD~\cite{lu20253dgs} & \checkmark & offline & 0.209 & 0.339 \\
MV3DCD~\cite{galappaththige2025multi} & \checkmark & offline & 0.478 & 0.628 \\
SceneDiff~\cite{wu2025scenediff} & \checkmark & offline & 0.473 & 0.609 \\
O\nobreakdash-SCD~\cite{galappaththige2025changes} & \checkmark & offline & 0.556 & 0.700 \\
O\nobreakdash-SCD++ & \checkmark & offline (retrospective) & \textbf{0.586} & \textbf{0.726} \\
\midrule
O\nobreakdash-SCD++ & \checkmark & online & 0.575 & 0.715 \\
\bottomrule
\end{tabular*}
\end{table}

\section{Representation Update}
\label{app:update}

The conference version~\cite{galappaththige2025changes} uses its change masks to update the reference map, so that the map stays current after a revisit. O\nobreakdash-SCD++ replaces only the change stage, so its masks drive the same update unchanged. \Cref{tab:update} compares this update with re-optimizing the post-change scene from scratch and with continual learning, using the baseline results of~\cite{galappaththige2025changes}, and with the update driven by the refined masks of O\nobreakdash-SCD, obtained after $3000$ further iterations of offline refinement. The retrospective masks of O\nobreakdash-SCD++ require no refinement. Both updates reach higher quality than re-optimization at a fraction of its time, and are equivalent to each other: PSNR is identical, O\nobreakdash-SCD++ is marginally better in SSIM and LPIPS, and it completes faster because it skips the refinement.

\begin{table}[!h]
\centering
\footnotesize
\setlength{\tabcolsep}{2pt}
\caption{Representation update on PASLCD~\cite{galappaththige2025multi}, evaluated on novel post-change views and averaged over all 20 instances. Results of the first four methods are from~\cite{galappaththige2025changes}. O\protect\nobreakdash-SCD and O\protect\nobreakdash-SCD++ use the same update procedure, differ only in the change masks, and are timed end to end with the same script, including mask generation.}
\label{tab:update}
\begin{tabular*}{\columnwidth}{@{\extracolsep{\fill}}lcccc}
\toprule
Method & PSNR$\uparrow$ & SSIM$\uparrow$ & LPIPS$\downarrow$ & Time (s)$\downarrow$ \\
\midrule
3DGS~\cite{kerbl20233d} & 22.21 & 0.756 & 0.243 & 550 \\
3DGS-LM~\cite{hollein20243dgslm} & 22.26 & 0.756 & 0.242 & 340 \\
Speedy-Splat~\cite{HansonSpeedy} & 22.25 & 0.760 & 0.262 & 399 \\
CLNeRF~\cite{cai2023clnerf} & 22.27 & 0.624 & 0.391 & 451 \\
\midrule
O\nobreakdash-SCD~\cite{galappaththige2025changes} & \textbf{23.82} & 0.790 & 0.242 & 49 \\
O\nobreakdash-SCD++ & \textbf{23.82} & \textbf{0.792} & \textbf{0.237} & \textbf{43} \\
\bottomrule
\end{tabular*}
\end{table}

\section{Pose Estimation on Synthetic Imagery}
\label{app:changesim}

\begin{table}[!t]
\centering
\footnotesize
\setlength{\tabcolsep}{2pt}
\caption{Pose front end on the synthetic ChangeSim~\cite{park2021changesim}, averaged over the test warehouses. Reference points are triangulated from matches between reference frames, as in O\protect\nobreakdash-SCD, or lifted with depth rendered from the reference map. Posed is the share of inference images that are registered. Inference images that are not registered receive an empty mask, which scores zero IoU and lowers the overall accuracy. The XFeat+MNN rows with triangulated points are those of \cref{tab:clsplats_changesim}. MNN: mutual nearest-neighbour matching. LG: LightGlue~\cite{lightglue_2023}.}
\label{tab:changesim_pose}
\begin{tabular*}{\columnwidth}{@{\extracolsep{\fill}}lllccc}
\toprule
Method & Matcher & Ref.\ points & Posed & mIoU & F1 \\
\midrule
O\nobreakdash-SCD~\cite{galappaththige2025changes} & XFeat+MNN & triangulated & $89\%$ & 0.404 & 0.535 \\
O\nobreakdash-SCD~\cite{galappaththige2025changes} & ALIKED+LG & triangulated & $91\%$ & 0.435 & 0.572 \\
O\nobreakdash-SCD++ & XFeat+MNN & triangulated & $89\%$ & 0.474 & 0.605 \\
O\nobreakdash-SCD++ & ALIKED+LG & triangulated & $91\%$ & 0.493 & 0.626 \\
\midrule
O\nobreakdash-SCD~\cite{galappaththige2025changes} & XFeat+MNN & rendered depth & $100\%$ & 0.486 & 0.638 \\
O\nobreakdash-SCD~\cite{galappaththige2025changes} & ALIKED+LG & rendered depth & $100\%$ & 0.496 & 0.650 \\
O\nobreakdash-SCD++ & XFeat+MNN & rendered depth & $100\%$ & 0.558 & 0.706 \\
O\nobreakdash-SCD++ & ALIKED+LG & rendered depth & $100\%$ & \textbf{0.560} & \textbf{0.708} \\
\bottomrule
\end{tabular*}
\end{table}

O\nobreakdash-SCD and O\nobreakdash-SCD++ target efficient operation in real environments, so their pose front end uses XFeat~\cite{potje2024xfeat}, a lightweight descriptor designed for real images. Such lightweight descriptors, XFeat and ALIKED~\cite{zhao2023aliked} alike, match less reliably on synthetic imagery such as that of ChangeSim, as \cref{tab:changesim_pose} shows. On the real-world PASLCD and CL-Splats, the same front end registers every inference image, so these failures are specific to synthetic imagery and do not arise in the setting that O\nobreakdash-SCD++ targets. With reference points triangulated from XFeat matches, $11\%$ of the inference images are not registered. Each failed registration receives an empty mask that scores zero IoU, which drags down the overall accuracy. We test two changes to the front end that leave the change stage untouched. Lifting the reference keypoints with depth rendered from the reference map, rather than triangulating them from unreliable matches between synthetic reference frames, registers every inference image and raises both O\nobreakdash-SCD and O\nobreakdash-SCD++ by $0.06$ to $0.08$ mIoU, so the margin between them is preserved. Replacing the matcher with ALIKED and LightGlue~\cite{lightglue_2023} adds little once every image is registered. As in Appendix~\ref{app:keyframes}, a stronger front end raises the results of O\nobreakdash-SCD++ further. The front end is modular, so improvements in feature matching and pose estimation carry over to O\nobreakdash-SCD++ directly.

\section{Towards a Faster Pose Front End}
\label{app:keyframes}

O\nobreakdash-SCD++ removes the iterative fusion, so pose estimation, which is adopted from O\nobreakdash-SCD, now dominates the per-frame time (\cref{tab:runtime}). This appendix is a proof of concept that system-level improvements to this front end raise throughput further, without any change to the change stage. We keep it out of the main paper because it concerns the adopted front end rather than our contributions.

For every inference image, retrieval ranks all reference frames by their mutual nearest-neighbour matches before pose estimation, so its cost grows with the size of the reference capture. We restrict retrieval to a subset of reference frames, the keyframes, selected once after triangulation. We compare standard selection criteria: uniform subsampling at a fixed budget of $50\%$; voxel coverage, which adds keyframes until every voxel of the reference points, on a $12^3$ grid, is observed twice; frustum coverage, which adds keyframes until $75\%$ of the points of every reference frame lie inside the frustum of a keyframe within $30^\circ$; and sampling in pose space. The last selects keyframes by farthest-point sampling over camera position and viewing direction until every reference frame has a keyframe within $10\%$ of the scene extent and $15^\circ$. All but uniform subsampling set the number of keyframes per scene.

\begin{table}[!t]
\centering
\footnotesize
\setlength{\tabcolsep}{2pt}
\caption{Keyframe selection for retrieval on PASLCD~\cite{galappaththige2025multi}, averaged over all 20 instances. Keyframes is the share of reference frames selected as keyframes, which retrieval searches. Speed-up is the throughput gain over searching all reference frames, measured in one session. All other settings are those of O\protect\nobreakdash-SCD++.}
\label{tab:keyframes}
\begin{tabular*}{0.9\columnwidth}{@{\extracolsep{\fill}}lcccc}
\toprule
Selection & Keyframes & mIoU & F1 & Speed-up \\
\midrule
All (default) & $100\%$ & 0.575 & 0.715 & -- \\
Uniform & $50\%$ & 0.574 & 0.715 & $+9\%$ \\
Voxel coverage & $58\%$ & 0.568 & 0.708 & $+6\%$ \\
Frustum coverage & $16\%$ & 0.549 & 0.686 & $+12\%$ \\
Pose-space sampling & $43\%$ & 0.573 & 0.714 & $+9\%$ \\
\bottomrule
\end{tabular*}
\end{table}

\Cref{tab:keyframes} shows that selecting half of the reference frames as keyframes leaves accuracy essentially unchanged and raises throughput by $9\%$, and that sampling in pose space reaches the same gain with $43\%$ of the reference frames, chosen per scene. Improving pose estimation therefore offers further gains in throughput, independently of the change stage. We leave a dedicated study of the front end, including keyframe selection, cheaper reranking and lighter matching, to future work, and invite the community to build on it.

\end{document}